\newcommand*{\NEURIPS}{}
\newcommand*{\CAMREADY}{}
	\newtheorem{claim}[theorem]{Claim}
	\newtheorem{fact}[theorem]{Fact}
	\newtheorem{procedure}{Procedure}
	\newtheorem{conjecture}{Conjecture}	
	\newtheorem{hypothesis}{Hypothesis}	
	\newcommand{\qed}{\hfill\ensuremath{\blacksquare}}
	\newtheorem{lemma}{Lemma}
	\newtheorem{corollary}{Corollary}
	\newtheorem{theorem}{Theorem}
	\newtheorem{proposition}{Proposition}
	\newtheorem{conjecture}{Conjecture}		
\def\be{\begin{equation}}
\def\ee{\end{equation}}
\def\beas{\begin{eqnarray*}}
\def\eeas{\end{eqnarray*}}
\def\bea{\begin{eqnarray}}
\def\eea{\end{eqnarray}}
\newcommand{\y}{{\mathbf y}}
\newcommand{\uu}{{\mathbf u}}
\newcommand{\vv}{{\mathbf v}}
\newcommand{\rr}{{\mathbf r}}
\newcommand{\s}{{\mathbf s}}
\newcommand{\bnu}{{\boldsymbol \nu}}
\newcommand{\A}{{\mathcal A}}
\newcommand{\D}{{\mathcal D}}
\renewcommand{\S}{{\mathcal S}}
\newcommand{\R}{{\mathbb R}}
\newcommand{\N}{{\mathbb N}}
\newcommand{\norm}[1]{\left\|#1 \right\|}
\newcommand{\inprod}[2]  {\left\langle{#1},{#2}\right\rangle}
\DeclareMathOperator*{\argmin}{argmin}
\definecolor{xcolor-gray}{gray}{0.95}
\newcommand{\rank}{\mathrm{rank}}
\newcommand{\shallow}{\mathrm{sha}}
\newcommand{\deep}{\mathrm{deep}}
\DeclareMathOperator{\diag}{diag}
\newcommand{\opt}{\mathsf{OPT}}
\definecolor{darkspringgreen}{rgb}{0.09, 0.45, 0.27}
	\let\note\endnote
	\let\notetext\endnotetext
	\let\note\footnote
	\let\notetext\footnotetext
	\newcommand*{\ABBR}{}
	\newcommand*{\ABBR}{}
	\newcommand*{\ABBR}{}
	\newcommand*{\ABBR}{}
	\newcommand{\eg}{{\it e.g.}}
	\newcommand{\ie}{{\it i.e.}}
	\newcommand{\cf}{{\it cf.}}
	\newcommand{\vs}{{\it vs.}}
	\newcommand{\etal}{{\it et al.}}
\begin{document}

\ifdefined\NEURIPS
	\title{Implicit Regularization in Deep Matrix Factorization}
	\author{
	\hspace{-2mm} Sanjeev Arora \\
	\hspace{-2mm} Princeton University and Institute for Advanced Study \\
	\hspace{-2mm} \texttt{arora@cs.princeton.edu} \\
	\And 
	\hspace{9mm} Nadav Cohen \\
	\hspace{9mm} Tel Aviv University \\
	\hspace{9mm} \texttt{cohennadav@cs.tau.ac.il} \\
	\And 
	\hspace{17mm} Wei Hu \\
	\hspace{17mm} Princeton University \\
	\hspace{17mm} \texttt{huwei@cs.princeton.edu} \\
	\And 
	\hspace{27mm} Yuping Luo \\
	\hspace{27mm} Princeton University \\
	\hspace{27mm} \texttt{yupingl@cs.princeton.edu} \\
	}
	\maketitle
\fi
\ifdefined\CVPR
	\title{Paper Title}
	\author{
	Author 1 \\
	Author 1 Institution \\	
	\texttt{author1@email} \\
	\and
	Author 2 \\
	Author 2 Institution \\
	\texttt{author2@email} \\	
	\and
	Author 3 \\
	Author 3 Institution \\
	\texttt{author3@email} \\
	}
	\maketitle
\fi
\ifdefined\AISTATS
	\twocolumn[
	\aistatstitle{Paper Title}
	\ifdefined\CAMREADY
		\aistatsauthor{Author 1 \And Author 2 \And Author 3}
		\aistatsaddress{Author 1 Institution \And Author 2 Institution \And Author 3 Institution}
	\else
		\aistatsauthor{Anonymous Author 1 \And Anonymous Author 2 \And Anonymous Author 3}
		\aistatsaddress{Unknown Institution 1 \And Unknown Institution 2 \And Unknown Institution 3}
	\fi
	]	
\fi
\ifdefined\ICML
	\twocolumn[
	\icmltitlerunning{Paper Title}
	\icmltitle{Paper Title} 
	\icmlsetsymbol{equal}{*}
	\begin{icmlauthorlist}
	\icmlauthor{Author 1}{institutionA} 
	\icmlauthor{Author 2}{institutionB}
	\icmlauthor{Author 3}{institutionA,institutionB}
	\end{icmlauthorlist}
	\icmlaffiliation{institutionA}{Department A, University A, City A, Region A, Country A}
	\icmlaffiliation{institutionB}{Department B, University B, City B, Region B, Country B}
	\icmlcorrespondingauthor{Corresponding Author 1}{cauthor1@email}
	\icmlcorrespondingauthor{Corresponding Author 2}{cauthor2@email}
	\icmlkeywords{Deep Learning, Learning Theory, Non-Convex Optimization}
	\vskip 0.3in
	]
	\printAffiliationsAndNotice{} 
\fi
\ifdefined\ICLR
	\title{Paper Title}
	\author{
	Author 1 \\
	Author 1 Institution \\
	\texttt{author1@email}
	\And
	Author 2 \\
	Author 2 Institution \\
	\texttt{author2@email}
	\And
	Author 3 \\ 
	Author 3 Institution \\
	\texttt{author3@email}
	}
	\maketitle
\fi
\ifdefined\COLT
	\title{Paper Title}
	\coltauthor{
	\Name{Author 1} \Email{author1@email} \\
	\addr Author 1 Institution
	\And
	\Name{Author 2} \Email{author2@email} \\
	\addr Author 2 Institution
	\And
	\Name{Author 3} \Email{author3@email} \\
	\addr Author 3 Institution}
	\maketitle
\fi

\begin{abstract}

Efforts to understand the generalization mystery in deep learning have led to the belief that gradient-based optimization induces a form of implicit regularization, a bias towards models of low ``complexity.''
We study the implicit regularization of gradient descent over deep linear neural networks for matrix completion and sensing, a model referred to as deep matrix factorization.
Our first finding, supported by theory and experiments, is that adding depth to a matrix factorization enhances an implicit tendency towards low-rank solutions, oftentimes leading to more accurate recovery.
Secondly, we present theoretical and empirical arguments questioning a nascent view by which implicit regularization in matrix factorization can be captured using simple mathematical norms.
Our results point to the possibility that the language of standard regularizers may not be rich enough to fully encompass the implicit regularization brought forth by gradient-based optimization.

\end{abstract}

\ifdefined\COLT
	\medskip
	\begin{keywords}
	\emph{TBD}, \emph{TBD}, \emph{TBD}
	\end{keywords}
\fi

\section{Introduction} \label{sec:intro}

It is a mystery how deep neural networks generalize despite having far more learnable parameters than training examples.
Explicit regularization techniques alone cannot account for this generalization, as they do not prevent the networks from being able to fit random data (see~\cite{zhang2017understanding}). 
A view by which gradient-based optimization induces an \emph{implicit regularization} has thus arisen.
Of course, this view would be uninsightful if ``implicit regularization'' were treated as synonymous with ``promoting generalization''~---~the question is whether we can characterize the implicit regularization independently of any validation data.
Notably, the mere use of the term ``regularization'' already predisposes us towards characterizations based on known explicit regularizers (\eg~a constraint  on some norm of the parameters), but one must also be open to the possibility that something else is afoot.  

An old argument (\cf~\cite{hochreiter1997flat,keskar2017large}) traces implicit regularization in deep learning to beneficial effects of noise introduced by small-batch stochastic optimization. The feeling is that solutions that do not generalize correspond to ``sharp minima,'' and added noise prevents convergence to such solutions. 
However, recent evidence (\eg~\cite{hoffer2017train,you2017scaling}) suggests that deterministic (or near-deterministic) gradient-based algorithms can also generalize, and thus a different explanation is in order.

A major hurdle in this study is that implicit regularization in deep learning seems to kick in only with certain types of data (not with random data for example), and we lack mathematical tools for reasoning about real-life data.
Thus one needs a simple test-bed for the investigation, where data admits a crisp mathematical formulation.
Following earlier works, we focus on the problem of \emph{matrix completion}: given a randomly chosen subset of entries from an unknown matrix~$W^*$, the task is to recover the unseen entries. 
To cast this as a prediction problem, we may view each entry in~$W^*$ as a data point:  observed entries constitute the training set, and the average reconstruction error over the unobserved entries is the test error, quantifying generalization.
Fitting the observed entries is obviously an underdetermined problem with multiple solutions.
However, an extensive body of work (see~\cite{davenport2016overview} for a survey) has shown that if~$W^*$ is low-rank, certain technical assumptions (\eg~``incoherence'') are satisfied and sufficiently many entries are observed, then various algorithms can achieve approximate or even exact recovery.
Of these, a well-known method based upon convex optimization finds the minimal nuclear norm matrix among those fitting all observed entries (see~\cite{candes2009exact}).\note{
Recall that the nuclear norm (also known as trace norm) of a matrix is the sum of its singular values, regarded as a convex relaxation of rank.
}

One may try to solve matrix completion using shallow neural networks. A natural approach, \emph{matrix factorization}, boils down to parameterizing the solution as a product of two matrices~---~$W = W_2 W_1$~---~and optimizing the resulting (non-convex) objective for fitting observed entries.
Formally, this can be viewed as training a depth-$2$ linear neural network. 
It is possible to explicitly constrain the rank of the produced solution by limiting the shared dimension of~$W_1$ and~$W_2$.
However, in practice, even when the rank is unconstrained, running gradient descent with small learning rate (step size) and initialization close to zero tends to produce low-rank solutions, and thus allows accurate recovery if~$W^*$ is low-rank. 
This empirical observation led Gunasekar~\etal~to conjecture in~\cite{gunasekar2017implicit} that gradient descent over a matrix factorization  induces an implicit regularization minimizing nuclear norm:
\begin{conjecture}[from~\cite{gunasekar2017implicit}, informally stated] \label{conj:nuclear}
With small enough learning rate and initialization close enough to the origin, gradient descent on a full-dimensional matrix factorization converges to the minimum nuclear norm solution.
\end{conjecture}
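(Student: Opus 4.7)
The plan is to pass to the continuous-time gradient flow limit, in which the small-step-size hypothesis becomes exact, and to analyze the resulting ODE on the factors $W_1$ and $W_2$. Differentiating the loss $L(W_2 W_1)$ gives the joint dynamics $\dot W_1 = -W_2^\top \nabla L(W)$ and $\dot W_2 = -\nabla L(W)\, W_1^\top$ with $W = W_2 W_1$. A short calculation shows that $W_2^\top W_2 - W_1 W_1^\top$ is conserved along the flow, so near-zero initialization pins the trajectory to a ``balanced'' manifold on which the two Gram matrices agree. On this manifold one can rewrite the induced dynamics on the end-to-end matrix as $\dot W = -(WW^\top)^{1/2}\nabla L(W) - \nabla L(W)(W^\top W)^{1/2}$, and this preconditioned form is the workhorse of the analysis.

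I would then examine the flow in the instantaneous basis of the SVD $W = U\Sigma V^\top$. Projecting component-wise, each singular value $\sigma_i$ evolves with a right-hand side that carries a factor of $\sigma_i$ itself, so directions of tiny scale evolve on a much slower time-scale than dominant ones. Starting from overall scale $\epsilon > 0$, only a few singular directions best aligned with $-\nabla L(0)$ are excited first, and subsequent directions are expected to switch on one at a time and only when needed to continue decreasing $L$. This ``incremental learning'' picture, quantified via Gr\"onwall-type estimates, should give a rigorous handle on why the trajectory hugs the low-rank stratum and why the effective rank stays as small as possible while still fitting the observed entries.

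The crucial step is to convert this dynamical picture into the variational statement of minimum nuclear norm. My approach would be to construct a KKT dual certificate for the nuclear norm program directly from the limiting gradient: one seeks a matrix $\Lambda$ supported on the observation mask satisfying $\Lambda = U_\infty V_\infty^\top + R_\infty$ with $R_\infty$ orthogonal to the range and co-range of $W_\infty$ and of operator norm at most one. Since the preconditioned flow stalls precisely when $(WW^\top)^{1/2}\nabla L(W) + \nabla L(W)(W^\top W)^{1/2}$ vanishes, one can read off a candidate $\Lambda$ from $\nabla L(W_\infty)$ and verify the two subgradient conditions using the balanced-SVD structure. Finally one would discretize back to gradient descent by the standard shadowing argument that a sufficiently small learning rate tracks the flow to arbitrary accuracy on any finite time horizon.

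The main obstacle, I expect, is precisely this dual-certificate step. Low effective rank of the limit is not equivalent to minimum nuclear norm: it is easy to imagine trajectories that converge to low-rank matrices whose singular vectors are not oriented in the way a nuclear-norm dual certificate demands. Ruling this out would seemingly require a delicate global argument tracing the sequential activation of singular directions during the flow, and I would not be surprised if the argument breaks, since the paper's abstract explicitly signals doubts about characterizing the implicit bias through any classical norm. I would therefore treat a successful proof of the conjecture as, at best, conditional on structural assumptions on the observation pattern or the ground truth $W^*$.
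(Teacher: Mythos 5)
The statement you were asked to prove is a \emph{conjecture}, not a theorem: the paper does not prove it, and in fact the central thrust of Section~\ref{sec:norm} is to cast doubt on it. What the paper does prove (Theorem~\ref{thm:nuclear_mf} and its deep analogue Theorem~\ref{thm:nuclear_dmf}) is a restricted version in which the measurement matrices $A_1,\ldots,A_m$ are assumed to \emph{commute}, and even then only under the hypothesis that the $\alpha\to 0$ limit exists and is a zero-loss global optimum. Your first two paragraphs correctly recover the machinery used in that restricted proof: the conservation of $W_2^\top W_2 - W_1 W_1^\top$, the induced preconditioned flow $\dot W = -(WW^\top)^{1/2}\nabla\ell(W) - \nabla\ell(W)(W^\top W)^{1/2}$, and the idea of closing the argument via an SDP dual certificate (that is exactly Lemma~\ref{lem:nuclear-opt-condition}, which checks complementary slackness between $I - \A^\dagger(\bnu)$ and $W^*$).

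What you are missing, and what makes the paper's restricted proof go through, is the simultaneous diagonalization that commutativity buys. With $\tilde A_i = O A_i O^\top$ all diagonal and $\tilde W(0) = \alpha^N I$ diagonal, the preconditioned flow preserves diagonality of $\tilde W(t)$, so the matrix ODE decouples into $d$ scalar initial-value problems that can be integrated in closed form; from the closed form one reads off the dual variable $\bnu_\infty(\alpha)$, verifies dual feasibility $\A^\dagger(\bnu_\infty(\alpha)) \prec I$, and shows the slack vanishes as $\alpha\to 0$. Without commutativity there is no common eigenbasis, the singular vectors of $W(t)$ rotate throughout training, and the ``incremental learning'' heuristic you sketch with Gr\"onwall estimates does not by itself produce a feasible dual certificate aligned with the limiting left and right singular vectors~---~precisely the obstruction you flag as the ``crucial step.''

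Your concluding skepticism is therefore well placed and matches the paper's actual position: the experiments in Figure~\ref{fig:exper_norm} show that in the data-poor regime gradient descent on matrix factorizations returns solutions with \emph{higher} nuclear norm but \emph{lower} effective rank than the nuclear-norm minimizer, so the conjecture appears to fail in general. A correct treatment of this exercise is not a proof but a clear account of why only the commuting case is tractable by this route and why the general case is dubious; you essentially arrived there, but you should state more explicitly that no proof is expected and that the paper itself is marshalling evidence against the conjecture rather than for it.
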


\paragraph{Deep matrix factorization}
Since standard matrix factorization can be viewed as a two-layer neural network (with linear activations), a natural extension is to consider deeper models.
A \emph{deep matrix factorization}\note{
Note that the literature includes various usages of this term~---~some in line with ours (\eg~\cite{trigeorgis2017deep,zhao2017multi,li2015deep}), while others less so (\eg~\cite{xue2017deep,fan2018matrix,wang2017multi}).
}
of~$W \in \R^{d, d'}$, with hidden dimensions $d_1, \ldots, d_{N - 1} \in \N$, is the parameterization:
\be
W = W_N W_{N - 1} \cdots W_1
\text{\,,}
\label{eq:dmf}
\ee
where $W_j \in \R^{d_j, d_{j - 1}}$, $j = 1, \ldots, N$, with $d_N := d, d_0 := d'$.
$N$~is referred to as the \emph{depth} of the factorization, the matrices $W_1, \ldots, W_N$ as its \emph{factors}, and the resulting~$W$ as the \emph{product matrix}. 

Could the implicit regularization of deep matrix factorizations be stronger than that of their shallow counterpart (which Conjecture~\ref{conj:nuclear} equates with nuclear norm minimization)?
Experiments reported in Figure~\ref{fig:exper_intro} suggest that this is indeed the case~---~depth leads to more accurate completion of a low-rank matrix when the number of observed entries is small.
Our purpose in the current paper is to mathematically analyze this stronger form of implicit regularization.
Can it be described by a matrix norm (or quasi-norm) continuing the line of Conjecture~\ref{conj:nuclear}, or is a paradigm shift required? 

\begin{figure}
\begin{center}
\includegraphics[width=\textwidth]{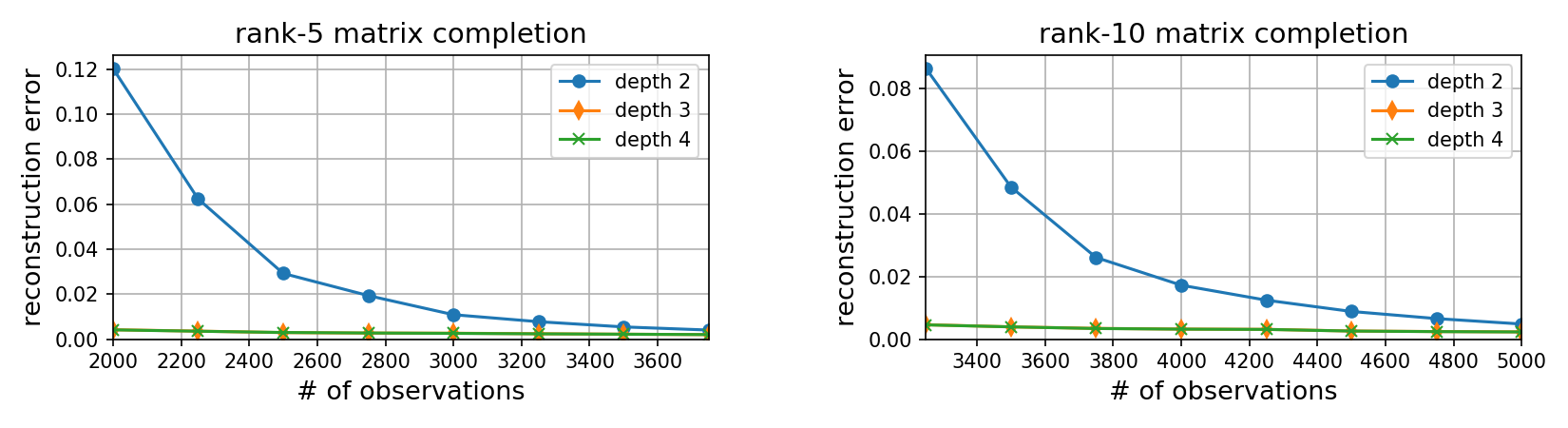}
\end{center}
\caption{
Matrix completion via gradient descent over deep matrix factorizations.
Left (respectively, right) plot shows reconstruction errors for matrix factorizations of depths~$2$, $3$ and~$4$, when applied to the completion of a random rank-$5$ (respectively, rank-$10$) matrix with size $100 \times 100$.
$x$-axis stands for the number of observed entries (randomly chosen), $y$-axis represents reconstruction error, and error bars (indiscernible) mark standard deviations of the results over multiple trials.
All matrix factorizations are full-dimensional, \ie~have hidden dimensions~$100$.
Both learning rate and standard deviation of (random, zero-centered) initialization for gradient descent were set to the small value~$10^{-3}$.
Notice, with few observed entries factorizations of depths~$3$ and~$4$ significantly outperform that of depth~$2$, whereas with more entries all factorizations perform well.
For further details, and a similar experiment on matrix sensing~tasks,~see~Appendix~\ref{app:exper}.
}
\label{fig:exper_intro}
\end{figure}

\subsection{Paper overview} \label{sec:intro:over}

In Section~\ref{sec:norm} we investigate the potential of norms for capturing the implicit regularization in deep matrix factorization. 
Surprisingly, we find that the main theoretical evidence connecting nuclear norm and shallow (depth-$2$) matrix factorization~---~proof given in~\cite{gunasekar2017implicit} for Conjecture~\ref{conj:nuclear} in a particular restricted setting~---~extends to arbitrarily deep factorizations as well.
This result disqualifies the natural hypothesis by which Schatten quasi-norms replace nuclear norm as the implicit regularization when one adds depth to a shallow matrix factorization.
Instead, when interpreted through the lens of~\cite{gunasekar2017implicit}, it brings forth a conjecture by which the implicit regularization is captured by nuclear norm for any depth.
Since our experiments (Figure~\ref{fig:exper_intro}) show that depth changes (enhances) the implicit regularization, we are led to question the theoretical direction proposed in~\cite{gunasekar2017implicit}, and accordingly conduct additional experiments to evaluate the validity of Conjecture~\ref{conj:nuclear}.

Typically, when the number of observed entries is sufficiently large with respect to the rank of the matrix to recover, nuclear norm minimization yields exact recovery, and thus it is impossible to distinguish between that and a different implicit regularization which also perfectly recovers.
The regime most interesting to evaluate is therefore that in which the number of observed entries is too small for exact recovery by nuclear norm minimization~---~here there is room for different implicit regularizations to manifest themselves by providing higher quality solutions.
Our empirical results show that in this regime, matrix factorizations consistently outperform nuclear norm minimization, suggesting that their implicit regularization admits stronger bias towards low-rank, in contrast to Conjecture~\ref{conj:nuclear}.
Together, our theory and experiments lead us to suspect that the implicit regularization in matrix factorization (shallow or deep) may not be amenable to description by a simple mathematical norm, and a detailed analysis of the dynamics in optimization may be necessary.

Section~\ref{sec:dyn} carries out such an analysis, characterizing how the singular value decomposition of the learned solution evolves during gradient descent.
Evolution rates of singular values turn out to be proportional to their size exponentiated by~$2-2/N$, where~$N$ is the depth of the factorization. 
This establishes a tendency towards low rank solutions, which intensifies with depth. 
Experiments validate the findings, demonstrating the dynamic nature of implicit regularization in deep matrix factorization.

We believe the trajectories traversed in optimization may be key to understanding generalization in deep learning, and hope that our work will inspire further progress along this line.

\section{Can the implicit regularization be captured by norms?} \label{sec:norm}

In this section we investigate the possibility of extending Conjecture~\ref{conj:nuclear} for explaining implicit regularization in deep matrix factorization.
Given the experimental evidence in Figure~\ref{fig:exper_intro}, one may hypothesize that gradient descent on a depth-$N$ matrix factorization implicitly minimizes some norm (or quasi-norm) that approximates rank, with the approximation being more accurate the larger $N$ is.
For example, a natural candidate would be Schatten-$p$ quasi-norm to the power of~$p$ ($0 < p \leq 1$), which for a matrix $W \in \R^{d, d'}$ is defined as: $\norm{W}_{S_p}^p := \sum\nolimits_{r = 1}^{\min\{d, d'\}} \sigma_r^p(W)$, where $\sigma_1(W), \ldots, \sigma_{\min\{d, d'\}}(W)$ are the singular values of~$W$.
For~$p = 1$ this reduces to nuclear norm, which by Conjecture~\ref{conj:nuclear} corresponds to a depth-$2$ factorization.
As~$p$ approaches zero we obtain a closer approximation of~$\rank(W)$, which could be suitable for factorizations of higher depths. 

We will focus in this section on \emph{matrix sensing}~---~a more general problem than matrix completion.
Here, we are given~$m$ measurement matrices $A_1, \ldots, A_m$, with corresponding labels $y_1, \ldots, y_m$ generated by $y_i = \inprod{A_i}{W^*}$, and our goal is to reconstruct the unknown matrix~$W^*$.
As in the case of matrix completion, well-known methods, and in particular nuclear norm minimization, can recover~$W^*$ if it is low-rank, certain technical conditions are met, and sufficiently many observations~are~given (see~\cite{recht2010guaranteed}).

\subsection{Current theory does not distinguish depth-$N$ from depth-$2$} \label{sec:norm:theory}

Our first result is that the theory developed by~\cite{gunasekar2017implicit} to support Conjecture~\ref{conj:nuclear} can be generalized to suggest that nuclear norm captures the implicit regularization in matrix factorization not just for depth~$2$, but for arbitrary depth.
This is of course inconsistent with the experimental findings reported in Figure~\ref{fig:exper_intro}.
We will first recall the existing theory, and then show how to extend it. 

\cite{gunasekar2017implicit}~studied implicit regularization in shallow (depth-$2$) matrix factorization by considering recovery of a positive semidefinite matrix from sensing via symmetric measurements, namely:
\be
\min\nolimits_{W \in \S_+^d} \, \ell(W) := \tfrac{1}{2} \sum\nolimits_{i = 1}^m (y_i - \inprod{A_i}{W})^2
\text{\,,}
\label{eq:psd_recover}
\ee
where $A_1, \ldots, A_m \in \R^{d, d}$ are symmetric and linearly independent, and $\S_+^d$ stands for the set of (symmetric and) positive semidefinite matrices in~$\R^{d, d}$.
Focusing on the underdetermined regime~$m \ll d^2$, they investigated the implicit bias brought forth by running \emph{gradient flow} (gradient descent with infinitesimally small learning rate) on a symmetric full-rank matrix factorization, \ie~on the objective:
\[
\psi : \R^{d, d} \to \R_{\geq 0}
\quad,\quad
\psi(Z) := \ell(Z Z^\top) = \tfrac{1}{2} \sum\nolimits_{i = 1}^m (y_i - \inprod{A_i}{Z Z^\top})^2
\text{\,.}
\]
For~$\alpha > 0$, denote by~$W_{\shallow, \infty}(\alpha)$ ($\shallow$~here stands for ``shallow'') the final solution~$Z Z^\top$ obtained from running gradient flow on~$\psi(\cdot)$ with initialization~$\alpha I$ ($\alpha$~times identity).
Formally, $W_{\shallow, \infty}(\alpha) := \lim_{t \to \infty} Z(t) Z(t)^\top$ where $Z(0) = \alpha I$ and $\dot{Z}(t) = -\frac{d \psi}{d Z}(Z(t))$ for~$t \in \R_{\geq 0}$ ($t$~here is a continuous time index, and~$\dot{Z}(t)$ stands for the derivative of~$Z(t)$ with respect to time).
Letting $\norm{\cdot}_*$ represent matrix nuclear norm, the following result was proven by~\cite{gunasekar2017implicit}:
\begin{theorem}[adaptation of Theorem~1 in~\cite{gunasekar2017implicit}] \label{thm:nuclear_mf}
Assume the measurement matrices $A_1, \ldots, A_m$ commute.
Then, if $\bar{W}_\shallow := \lim_{\alpha \to 0} W_{\shallow, \infty}(\alpha)$ exists and is a global optimum for Equation~\eqref{eq:psd_recover} with~$\ell(\bar{W}_\shallow) = 0$, it holds that $\bar{W}_\shallow \in \argmin_{W \in \S_+^d,\ \ell(W) = 0} \norm{W}_*$, \ie~$\bar{W}_\shallow$ is a global optimum with minimal nuclear norm.\note{
The result of~\cite{gunasekar2017implicit} is slightly more general~---~it allows gradient flow to be initialized by~$\alpha O$, where~$O$ is an arbitrary orthogonal matrix, and it is shown that this leads to the exact same~$W_{\shallow, \infty}(\alpha)$ as one would obtain from initializing at~$\alpha I$.
For simplicity, we limit our discussion to the~latter~initialization.
}
\end{theorem}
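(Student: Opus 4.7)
\emph{Proof plan.}
The plan is to exploit the commutativity of $\{A_i\}$ to diagonalize the matrix gradient flow simultaneously with the measurement matrices, reduce to a family of decoupled scalar ODEs, recognize those as a mirror flow on the convex loss $\ell$ with entropic potential, and use the Bregman-divergence characterization of mirror-flow limits to identify $\bar W_\shallow$ as $\alpha \to 0$.

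First, direct differentiation (using symmetry of each $A_i$) gives $\nabla_Z \psi(Z) = -2\, R(ZZ^\top)\, Z$ with $R(W) := \sum_i r_i(W)\, A_i$ and $r_i(W) := y_i - \inprod{A_i}{W}$, so gradient flow yields $\dot Z = 2 R(W) Z$ and hence $\dot W = 2(R(W) W + W R(W))$ for $W(t) := Z(t) Z(t)^\top$. Since the $A_i$ are symmetric and pairwise commuting, they share an orthonormal eigenbasis $U$; set $D_i := U^\top A_i U$ and $\widetilde W := U^\top W U$. The initial condition $\widetilde W(0) = \alpha^2 I$ is diagonal, and $R(W)$ is diagonal whenever $\widetilde W$ is (being a linear combination of the $D_i$), so diagonality is preserved throughout. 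Writing $w_k$ and $a_{i,k}$ for the respective diagonal entries, the flow decouples into
\[
\dot w_k(t) \;=\; 4\, \rho_k(t)\, w_k(t), \qquad \rho_k(t) \;:=\; \sum\nolimits_i r_i(W(t))\, a_{i,k},
\]
with closed form $w_k(t) = \alpha^2 \exp(4\int_0^t \rho_k(s)\,ds) \geq 0$. Hence $W(t) \in \S_+^d$ along the flow and $\norm{W(t)}_* = \Tr(W(t)) = \sum_k w_k(t)$, reducing the theorem to showing that the limit minimizes $\sum_k w_k$ over the feasible set $\{w : w_k \geq 0,\ \sum_k a_{i,k} w_k = y_i\}$.

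Substituting $u_k := \log(w_k/\alpha^2)$ gives $\dot u_k = 4 \rho_k = -4\, \partial \ell / \partial w_k$: up to a time rescaling this is mirror flow on the convex loss $\ell$ with entropic potential $\Phi(w) := \sum_k (w_k \log w_k - w_k)$ started at $W(0) = \alpha^2 I$. Under the hypothesis that the flow converges to a zero-loss point, standard mirror-descent theory identifies
\[
W_{\shallow, \infty}(\alpha) \;=\; \argmin_{W \in \S_+^d,\, \ell(W) = 0}\, D_\Phi(W, \alpha^2 I), \qquad D_\Phi(W, \alpha^2 I) \;=\; \sum_k \bigl[w_k \log(w_k/\alpha^2) - w_k + \alpha^2\bigr].
\]
As $\alpha \to 0$ the leading term in $D_\Phi$ is $2\log(1/\alpha) \cdot \sum_k w_k = 2\log(1/\alpha) \cdot \Tr(W)$, which dominates the $O(1)$ entropy/constant contribution uniformly over any bounded family of feasible $W$. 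Any accumulation point of $W_{\shallow, \infty}(\alpha)$ therefore minimizes $\Tr(W) = \norm{W}_*$ over the feasible set, yielding the claim.

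The main obstacle is the $\alpha \to 0$ limit passage: making the ``leading term dominates'' argument rigorous requires comparing $D_\Phi(W^\dagger, \alpha^2 I)$ at a trace minimizer $W^\dagger$ to $D_\Phi(W, \alpha^2 I)$ at any competitor $W$ and showing the difference equals $2\log(1/\alpha)[\Tr(W^\dagger) - \Tr(W)] + O(1)$, so that for small $\alpha$ its sign is forced to be governed by the traces. This limit interchange, together with the underlying control on where the flow lives for each fixed $\alpha$, is where Gunasekar \etal's analysis concentrates its technical effort; the remaining steps (gradient computation, simultaneous diagonalization, and the mirror-flow identification) are essentially mechanical once commutativity is invoked.
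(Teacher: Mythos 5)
Your reduction (simultaneous diagonalization of the commuting $A_i$, decoupling into scalar ODEs $\dot w_k = 4\rho_k w_k$ with exponential closed form) is exactly the first half of the argument the paper uses for the $N\ge 3$ analogue (Appendix~\ref{app:proofs:nuclear_dmf}) and reproduces for $N=2$ in Appendix~\ref{app:gunasekar_asym}. After that, however, your route diverges: you cast the decoupled flow as mirror flow with entropic potential and try to characterize $W_{\shallow,\infty}(\alpha)$ as a Bregman projection, then pass to $\alpha\to 0$. The paper (following~\cite{gunasekar2017implicit}) instead reads off an explicit dual variable $\bnu_\infty(\alpha)$ from the ODE solution, proves dual feasibility $\A^\dagger(\bnu_\infty(\alpha))\prec I$ for every $\alpha$, proves approximate complementary slackness $\langle I - \A^\dagger(\bnu_\infty(\alpha)), \bar W\rangle\to 0$, and then applies an SDP-duality lemma (Lemma~\ref{lem:nuclear-opt-condition}) to the limit object.

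There is a genuine gap in your version. The Bregman-projection identity
\[
W_{\shallow,\infty}(\alpha) \;=\; \argmin_{W\in\S_+^d,\ \ell(W)=0} D_\Phi(W,\alpha^2 I)
\]
requires that for \emph{each fixed} $\alpha$ the flow converges to a point satisfying $\ell(W_{\shallow,\infty}(\alpha))=0$; otherwise $W_{\shallow,\infty}(\alpha)$ is not even feasible for the minimization. But the theorem only assumes $\ell(\bar W_\shallow)=0$ for the $\alpha\to 0$ limit $\bar W_\shallow$, not for each $\alpha$. The duality route sidesteps this: dual feasibility holds for every $\alpha>0$ directly from the closed form, and only complementary slackness is taken to the limit, where the zero-loss assumption is actually available. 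A second, smaller issue you flag yourself: the ``$2\log(1/\alpha)\Tr(W)$ dominates'' step is left as a sketch, and it also needs care when the Bregman projection lands on the boundary $w_k=0$, where $\nabla\Phi$ blows up and the interior first-order characterization must be replaced by KKT with complementary slackness. Both issues are precisely what the dual-certificate argument is built to handle cleanly, so while your mirror-flow framing is an attractive and conceptually illuminating alternative, as written it proves a weaker statement (one with the extra hypothesis of zero loss at every $\alpha$) and leaves the limit passage unrigorized.
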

Motivated by Theorem~\ref{thm:nuclear_mf} and empirical evidence they provided, \cite{gunasekar2017implicit}~raised Conjecture~\ref{conj:nuclear}, which, formally stated, hypothesizes that the condition in Theorem~\ref{thm:nuclear_mf} of~$\{A_i\}_{i = 1}^m$ commuting is unnecessary, and an identical statement holds for arbitrary (symmetric linearly independent) measurement~matrices.\note{
Their conjecture also relaxes the requirement from the initialization of gradient flow~---~an initial value of~$\alpha Z_0$ is believed to suffice, where $Z_0$ is an arbitrary full-rank matrix (that does not depend on~$\alpha$).
}

While the analysis of~\cite{gunasekar2017implicit} covers only symmetric matrix factorizations of the form~$Z Z^\top$, they noted that it can be extended to also account for asymmetric factorizations of the type considered in the current paper.
Specifically, running gradient flow on the objective:
\[
\phi(W_1, W_2) := \ell(W_2 W_1) = \tfrac{1}{2} \sum\nolimits_{i = 1}^m (y_i - \inprod{A_i}{W_2 W_1})^2
\text{\,,}
\]
with $W_1, W_2 \in \R^{d, d}$ initialized to~$\alpha I$, $\alpha > 0$, and denoting by~$W_{\shallow, \infty}(\alpha)$ the product matrix obtained at the end of optimization (\ie~$W_{\shallow, \infty}(\alpha) := \lim_{t \to \infty} W_2(t) W_1(t)$ where $W_j(0) = \alpha I$ and $\dot{W_j}(t) = -\frac{\partial \phi}{\partial W_j}(W_1(t), W_2(t))$ for~$t \in \R_{\geq 0}$), Theorem~\ref{thm:nuclear_mf} holds exactly as stated.
For completeness, we provide a proof of this fact in Appendix~\ref{app:gunasekar_asym}.

\medskip

Next, we show that Theorem~\ref{thm:nuclear_mf}~---~the main theoretical justification for the connection between nuclear norm and shallow matrix factorization~---~extends to arbitrarily deep factorizations as well.
Consider gradient flow over the objective:
\[
\phi(W_1, \ldots, W_N) := \ell(W_N W_{N - 1} \cdots W_1) = \tfrac{1}{2} \sum\nolimits_{i = 1}^m (y_i - \inprod{A_i}{W_N W_{N - 1} \cdots W_1})^2
\text{\,,}
\]
with $W_1, \ldots, W_N \in \R^{d, d}$ initialized to~$\alpha I$, $\alpha > 0$.
Using~$W_{\deep, \infty}(\alpha)$ to denote the product matrix obtained at the end of optimization (\ie~$W_{\deep, \infty}(\alpha) := \lim_{t \to \infty} W_N(t) W_{N - 1}(t) \cdots W_1(t)$ where $W_j(0) = \alpha I$ and $\dot{W_j}(t) = -\frac{\partial \phi}{\partial W_j}(W_1(t), \ldots, W_N(t))$ for~$t \in \R_{\geq 0}$), a result analogous to Theorem~\ref{thm:nuclear_mf} holds:
\begin{theorem} \label{thm:nuclear_dmf}
Suppose $N \geq 3$, and that the matrices $A_1, \ldots, A_m$ commute.
Then, if $\bar{W}_\deep := \lim_{\alpha \to 0} W_{\deep, \infty}(\alpha)$ exists and is a global optimum for Equation~\eqref{eq:psd_recover} with~$\ell(\bar{W}_\deep) = 0$, it holds that $\bar{W}_\deep \in \argmin_{W \in \S_+^d,\,\ell(W) = 0} \norm{W}_*$, \ie~$\bar{W}_\deep$ is a global optimum with minimal nuclear~norm.\note{
By Appendix~\ref{app:proofs:nuclear_dmf}: $W_N(t) W_{N - 1}(t) \cdots W_1(t) \,{\succeq}\, 0 ~~ \forall t$.
Therefore, even though the theorem treats optimization over~$\S_+^d$ using an unconstrained asymmetric factorization, gradient flow implicitly constrains the search to~$\S_+^d$, so the assumption of~$\bar{W}_\deep$ being a global optimum for Equation~\eqref{eq:psd_recover} with~$\ell(\bar{W}_\deep) = 0$ is no stronger than the analogous assumption in Theorem~\ref{thm:nuclear_mf} from~\cite{gunasekar2017implicit}.
The implicit constraining to~$\S_+^d$ also holds when~$N = 2$ (see Appendix~\ref{app:gunasekar_asym}), so the asymmetric extension of Theorem~\ref{thm:nuclear_mf} does not involve strengthening assumptions either.
}
\end{theorem}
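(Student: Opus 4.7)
The plan is to mirror the proof of Theorem~\ref{thm:nuclear_mf}, replacing the logarithmic integration used in the depth-$2$ case with the integration of a power law that naturally arises for $N\geq 3$. The three substantive steps I would carry out are: (i)~reduce the matrix gradient flow to a decoupled system of scalar ODEs by simultaneously diagonalizing $A_1,\ldots,A_m$ and exploiting the symmetry of the initialization; (ii)~integrate the scalar ODEs in closed form; and (iii)~verify, in the limit $\alpha\to 0$, the KKT conditions for the convex SDP $\min\{\|W\|_*:W\in\S_+^d,\,\langle A_i,W\rangle=y_i\}$, which imply global optimality by convexity.

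First I would exploit commutativity. Let $O$ be an orthogonal matrix simultaneously diagonalizing $A_1,\ldots,A_m$. In the basis $\tilde W_j:=O^\top W_jO$, the initialization becomes $\tilde W_j(0)=\alpha I$, and the gradient $\partial\phi/\partial W_j$, evaluated at a tuple of equal diagonal factors $W_j=W'$, reduces to $(W')^{N-1}\nabla_W\ell((W')^N)$~---~diagonal and independent of $j$. Uniqueness of the ODE solution then forces $\tilde W_j(t)\equiv W(t)$ for a single diagonal $W(t)=\diag(w_1(t),\ldots,w_d(t))$ with $w_r(0)=\alpha$, so the product matrix is diagonal with entries $u_r(t):=w_r(t)^N\geq 0$ (the fixed point $w_r=0$ of the scalar flow cannot be crossed); this also yields the positive semidefiniteness used in the footnote.

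Next I would integrate the scalar dynamics $\dot u_r=Nu_r^{2-2/N}g_r$, where $g_r(t):=\sum_i(y_i-\langle A_i,W(t)^N\rangle)(A_i)_{rr}$. Dividing by $u_r^{2-2/N}$ turns the left-hand side into $\frac{d}{dt}\bigl[u_r^{2/N-1}/(2/N-1)\bigr]$, and integrating from $0$ to $\infty$ yields
\[
u_r(\infty)^{2/N-1}-\alpha^{2-N}\;=\;(2-N)\sum_{i=1}^m\rho_i^\infty\,(A_i)_{rr},
\qquad \rho_i^\infty:=\int_0^\infty\bigl(y_i-\langle A_i,W(s)^N\rangle\bigr)\,ds,
\]
whose convergence follows from $\bar W_\deep$ being a zero-loss global optimum. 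Setting $\mu_i(\alpha):=(N-2)\alpha^{N-2}\rho_i^\infty$ and dividing by $\alpha^{2-N}$ recasts this as $\sum_{i=1}^m\mu_i(\alpha)(A_i)_{rr}=1-\bigl(\alpha^N/u_r(\infty)\bigr)^{1-2/N}$. Since $1-2/N>0$ and $\alpha^N/u_r(\infty)\geq 0$, the right-hand side is always at most $1$, and tends to $1$ exactly when $\bar u_r:=\lim_{\alpha\to 0}u_r(\infty)>0$. Passing to a subsequence along which $\mu_i(\alpha)\to\mu_i\in\R$, the matrix $\Lambda:=I-\sum_i\mu_iA_i$ is diagonal in the $O$-basis with $\Lambda\succeq 0$ and $\Lambda\bar W_\deep=0$, which together with the feasibility $\ell(\bar W_\deep)=0$ are exactly the KKT conditions for $\min\{\Tr(W):W\in\S_+^d,\,\langle A_i,W\rangle=y_i\}$; on $\S_+^d$ this agrees with nuclear norm minimization, so $\bar W_\deep$ is globally optimal.

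The hard part will be establishing the finiteness of the subsequential limit of $\mu_i(\alpha)$. Unlike the logarithmic rescaling used in~\cite{gunasekar2017implicit} for $N=2$, here the rescaling factor $\alpha^{N-2}$ is fixed, so the identity forces $\sum_i\mu_i(\alpha)(A_i)_{rr}\to-\infty$ whenever $u_r(\infty)/\alpha^N\to 0$. Ruling this out would require either propagating through the dynamics a lower bound of the form $u_r(\infty)\gtrsim\alpha^N$, or a compactness argument exploiting both the dual feasibility $\sum_i\mu_i(\alpha)(A_i)_{rr}\leq 1$ for every $r$ and a Slater-type condition on the primal SDP.
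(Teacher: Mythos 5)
Your reduction to a diagonal scalar system and the closed-form integration for $N\geq 3$ match the paper's proof essentially step for step (the paper works directly with the product-matrix dynamics from Lemma~\ref{lem:end_to_end_dynamics}, whereas you propagate equality of the diagonal factors, but this is cosmetic). The place where you diverge, and where the gap you flag is real, is the endgame: you cast the conclusion as verifying KKT conditions, which forces you to extract a \emph{convergent} subsequential limit of the dual variables $\mu_i(\alpha)$, and you correctly observe that nothing a priori prevents $\mu_i(\alpha)$ from blowing up if some $u_r(\infty)/\alpha^N\to 0$.

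The paper avoids this entirely by never taking a limit of the duals. It isolates the following fact (Lemma~\ref{lem:nuclear-opt-condition}): if $W^*\in\S_+^d$ is primal-feasible and there is a \emph{sequence} $\bnu_n$ that is dual-feasible ($\A^\dagger(\bnu_n)\preceq I$) with $\langle I-\A^\dagger(\bnu_n),W^*\rangle\to 0$, then $W^*$ is optimal. The proof is one line of weak duality: for each $n$, $0\leq\|W^*\|_*-\opt\leq\|W^*\|_*-\bnu_n^\top\y=\langle I-\A^\dagger(\bnu_n),W^*\rangle$, and the right side tends to $0$. Your own computations already give both ingredients: dual feasibility of $\bnu_\infty(\alpha)$ follows from the identity $\sum_i\mu_i(\alpha)(A_i)_{rr}=1-(\alpha^N/u_r(\infty))^{1-2/N}\leq 1$ (your displayed equation), and the complementary-slackness-in-the-limit follows because $(\tilde W^*_\deep)_{kk}\neq 0$ forces $\tilde{\A}^\dagger_{kk}(\bnu_\infty(\alpha))\to 1$, so each summand in $\langle I-\A^\dagger(\bnu_\infty(\alpha)),W^*_\deep\rangle=\sum_k(1-\tilde{\A}^\dagger_{kk}(\bnu_\infty(\alpha)))(\tilde W^*_\deep)_{kk}$ (which is nonnegative by dual feasibility) vanishes. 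So neither a uniform lower bound $u_r(\infty)\gtrsim\alpha^N$ nor a Slater/compactness argument is needed; replace the subsequential-limit step with this sequence-of-feasible-duals argument and the proof closes.
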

\begin{proof}[Proof sketch (for complete proof see Appendix~\ref{app:proofs:nuclear_dmf})] 
Our proof is inspired by that of Theorem~\ref{thm:nuclear_mf} given in~\cite{gunasekar2017implicit}.
Using the expression for~$\dot{W}(t)$ derived in~\citep{arora2018optimization} (Lemma~\ref{lem:end_to_end_dynamics} in Appendix~\ref{app:lemmas}), it can be shown that~$W(t)$ commutes with~$\{A_i\}_{i = 1}^m$, and takes on a particular form.
Taking limits $t \to \infty$ and $\alpha \to 0$, optimality (minimality) of nuclear norm is then established using a duality argument.
\end{proof}

Theorem~\ref{thm:nuclear_dmf} provides a particular setting where the implicit regularization in deep matrix factorizations boils down to nuclear norm minimization.
By Proposition~\ref{prop:schatten_disq} below, there exist instances of this setting for which the minimization of nuclear norm contradicts minimization (even locally) of Schatten-$p$ quasi-norm for any $0 < p < 1$.
Therefore, one cannot hope to capture the implicit regularization in deep matrix factorizations through Schatten quasi-norms.
Instead, if we interpret Theorem~\ref{thm:nuclear_dmf} through the lens of~\cite{gunasekar2017implicit}, we arrive at a conjecture by which the implicit regularization is captured by~nuclear~norm~for~any~depth.

\begin{proposition} \label{prop:schatten_disq}
For any dimension~$d \geq 3$, there exist linearly independent symmetric and commutable measurement matrices $A_1, \ldots, A_m \in \R^{d,d}$, and corresponding labels $y_1, \ldots, y_m \in \R$, such that the limit solution defined in Theorem~\ref{thm:nuclear_dmf}~---~$\bar{W}_\deep$~---~which has been shown to satisfy $\bar{W}_\deep \in \argmin_{W \in \S_+^d, \, \ell(W) = 0} \norm{W}_*$,
is not a local minimum of the following program for any $0 < p < 1$:\note{
Following~\cite{gunasekar2017implicit}, we take for granted existence of~$\bar{W}_\deep$ and it being a global optimum for Equation~\eqref{eq:psd_recover} with~$\ell(\bar{W}_\deep) = 0$.
If this is not the case then Theorem~\ref{thm:nuclear_dmf} does not apply, and hence it obviously does not disqualify minimization of Schatten quasi-norms as the implicit regularization.
}
\[
 \min\nolimits_{W \in \S_+^d, \, \ell(W)=0} \norm{W}_{S_p}
\text{\,.}
\]
\end{proposition}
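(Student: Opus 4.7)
The plan is to exhibit a single explicit instance in which the limit $\bar{W}_\deep$ guaranteed by Theorem~\ref{thm:nuclear_dmf} fails to be a local Schatten-$p$ minimum. For any $d \geq 3$, I would take $m = 2$ with the diagonal (hence symmetric and commuting) matrices $A_1 = e_1 e_1^\top$ and $A_2 = e_2 e_2^\top + e_3 e_3^\top$, and labels $y_1 = y_2 = 1$. These are linearly independent, and the feasible set $\{W \in \S_+^d : \ell(W) = 0\}$ consists of PSD matrices with $W_{11} = 1$ and $W_{22} + W_{33} = 1$. Since $\norm{W}_* = \Tr(W)$ on $\S_+^d$, every feasible $W$ has $\norm{W}_* = 2$, so the entire feasible set already minimizes nuclear norm, making any symmetric point inside it consistent with Theorem~\ref{thm:nuclear_dmf}.

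Next I would pin down $\bar{W}_\deep$ through symmetry of the dynamics. Because each $A_i$ is diagonal and $W_j(0) = \alpha I$, the gradient $\nabla\ell(W) = \sum_i (\inprod{A_i}{W} - y_i)A_i$ is always diagonal, which keeps every factor $W_j(t)$ diagonal. The coordinates $k \geq 4$ have identically zero residual, so they remain at $\alpha^N$ and vanish as $\alpha \to 0$. Moreover, the whole problem is invariant under the permutation that swaps coordinates $2$ and $3$, and so is the initialization; uniqueness of the gradient-flow ODE propagates this invariance along the trajectory, giving $W_{22}(t) \equiv W_{33}(t)$. Combined with the theorem's hypothesis $\ell(\bar{W}_\deep) = 0$, this forces $\bar{W}_\deep = \diag(1, \tfrac12, \tfrac12, 0, \ldots, 0)$.

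The last step is a short perturbation. Along the feasible curve $W(\epsilon) = \diag(1, \tfrac12 + \epsilon, \tfrac12 - \epsilon, 0, \ldots, 0)$ for $|\epsilon| < \tfrac12$, the function $f(\epsilon) := \norm{W(\epsilon)}_{S_p}^p = 1 + (\tfrac12 + \epsilon)^p + (\tfrac12 - \epsilon)^p$ satisfies $f'(0) = 0$ and $f''(0) = 2p(p-1)\cdot(\tfrac12)^{p-2} < 0$ for every $0 < p < 1$ by strict concavity of $t \mapsto t^p$ on $\R_{> 0}$. Hence $f(\epsilon) < f(0)$ for all small nonzero $\epsilon$; since $x \mapsto x^{1/p}$ is increasing, this gives $\norm{W(\epsilon)}_{S_p} < \norm{\bar{W}_\deep}_{S_p}$, disqualifying $\bar{W}_\deep$ as a local Schatten-$p$ minimum.

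The delicate point I expect to manage carefully is the symmetry argument pinning down $\bar{W}_\deep$: Theorem~\ref{thm:nuclear_dmf} only tells us that $\bar{W}_\deep$ is \emph{some} nuclear norm minimizer, but here the set of nuclear norm minimizers is an entire segment, so the identification we need is strictly stronger than what the theorem directly provides. Ensuring that the permutation symmetry really propagates through the non-convex ODE — rather than being accidentally broken by the dynamics — is the crux of the construction and explains why the ``twin coordinates'' built into $A_2$ are essential to the example.
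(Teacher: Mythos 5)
Your proof is correct, but takes a genuinely different route from the paper. The paper picks diagonal constraints $W_{11} = W_{22}$ and $W_{11} = W_{kk} + 1$ for $k \geq 3$, which make the diagonal PSD nuclear-norm minimizer \emph{unique} (namely $\diag(1,1,0,\ldots,0)$), so that the proof of Theorem~\ref{thm:nuclear_dmf} — which shows $\bar{W}_\deep$ must be diagonal when the $A_i$ are — immediately pins it down with no further argument. It then perturbs \emph{off}-diagonal: $\hat{W}_\epsilon$ with $\epsilon$ in entries $(1,2)$ and $(2,1)$, whose eigenvalues $1\pm\epsilon$ make the strict-concavity comparison trivial. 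Your construction instead has a whole segment of minimum-trace solutions, so you must pin down $\bar{W}_\deep$ by a symmetry argument; this is valid, but note it can be shortened by appealing directly to Equations \eqref{eqn:e2e_diagonals_dynamics_in_norm_proof}--\eqref{eqn:e2e_diagonalized_solution_in_norm_proof} in the proof of Theorem~\ref{thm:nuclear_dmf}, which show $\tilde{W}_{kk}(t)$ depends only on $\tilde{\A}^\dagger_{kk}(\cdot)$ and the initialization — identical for $k=2,3$ (and zero for $k\geq4$) — so $\bar{W}_\deep = \diag(1,\tfrac12,\tfrac12,0,\ldots,0)$ without invoking equivariance of the full factor ODE. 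Your perturbation is then \emph{diagonal}, moving along the flat segment of nuclear-norm minimizers, which is a pleasant illustration of how Schatten-$p$ discriminates among nuclear-norm-optimal points. One small slip: the claim that ``every feasible $W$ has $\norm{W}_* = 2$'' is false (e.g.\ $\diag(1,1,0,c,0,\ldots,0)$ with $c>0$ is feasible with trace $2+c$); the feasible set only \emph{attains} minimum nuclear norm $2$ on a subset. This does not affect the argument, since $\bar{W}_\deep$ itself has trace $2$, but the sentence should be corrected. Net assessment: a sound alternative proof; the paper's construction is marginally more economical in that uniqueness removes the need to identify $\bar{W}_\deep$ beyond what Theorem~\ref{thm:nuclear_dmf} already gives.
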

\begin{proof}[Proof sketch (for complete proof see Appendix~\ref{app:proofs:schatten_disq})]
We choose $A_1, \ldots, A_m$ and $y_1, \ldots, y_m$ such that:
\emph{(i)}~$\bar{W}_\deep = \diag(1, 1, 0, \ldots, 0)$; and
\emph{(ii)}~adding $\epsilon \in (0, 1)$ to entries $(1, 2)$ and~$(2,1)$ of~$\bar{W}_\deep$ maintains optimality.
The result then follows from the fact that the addition of~$\epsilon$ decreases Schatten-$p$ quasi-norm for any $0 < p < 1$.
\end{proof}

\subsection{Experiments challenging Conjecture~\ref{conj:nuclear}} \label{sec:norm:exper} 

Subsection~\ref{sec:norm:theory} suggests that from the perspective of current theory, it is natural to apply Conjecture~\ref{conj:nuclear} to matrix factorizations of arbitrary depth.
On the other hand, the experiment reported in Figure~\ref{fig:exper_intro} implies that depth changes (enhances) the implicit regularization.
To resolve this tension we conduct a more refined experiment, which ultimately puts in question the validity of Conjecture~\ref{conj:nuclear}.

Our experimental protocol is as follows.
For different matrix completion tasks with varying number of observed entries, we compare minimum nuclear norm solution to those brought forth by running gradient descent on matrix factorizations of different depths.
For each depth, we apply gradient descent with different choices of learning rate and standard deviation for (random, zero-centered) initialization, observing the trends as these become smaller.
The outcome of the experiment is presented in Figure~\ref{fig:exper_norm}.
As can be seen, when the number of observed entries is sufficiently large with respect to the rank of the matrix to recover, factorizations of all depths indeed admit solutions that tend to minimum nuclear norm.
However, when there are less entries observed~---~precisely the data-poor setting where implicit regularization matters most~---~neither shallow (depth-$2$) nor deep (depth-$N$ with~$N \geq 3$) factorizations minimize nuclear norm.
Instead, they put more emphasis on lowering the effective rank (\cf~\cite{roy2007effective}), in a manner which is stronger for deeper factorizations.

A close look at the experiments of~\cite{gunasekar2017implicit} reveals that there too, in situations where the number of observed entries (or sensing measurements) was small (less than required for reliable recovery), a discernible gap appeared between the minimal nuclear norm and that returned by (gradient descent on) a matrix factorization.
In light of Figure~\ref{fig:exper_norm}, we believe that if~\cite{gunasekar2017implicit} had included in its plots an accurate surrogate for rank (\eg~effective rank or Schatten-$p$ quasi-norm with small~$p$), scenarios where matrix factorization produced sub-optimal (higher than minimum) nuclear norm would have manifested superior (lower) rank.
More broadly, our experiments suggest that the implicit regularization in (shallow or deep) matrix factorization is somehow geared towards low rank, and just so happens to minimize nuclear norm in cases with sufficiently many observations, where minimum nuclear norm and minimum rank are known to coincide (\cf~\cite{candes2009exact,recht2010guaranteed}).
We note that the theoretical analysis of~\cite{li2018algorithmic} supporting Conjecture~\ref{conj:nuclear} is limited to such cases, and thus cannot truly distinguish between nuclear norm minimization and some other form of implicit regularization favoring low rank.

\begin{figure}
\begin{center}
\includegraphics[width=\textwidth]{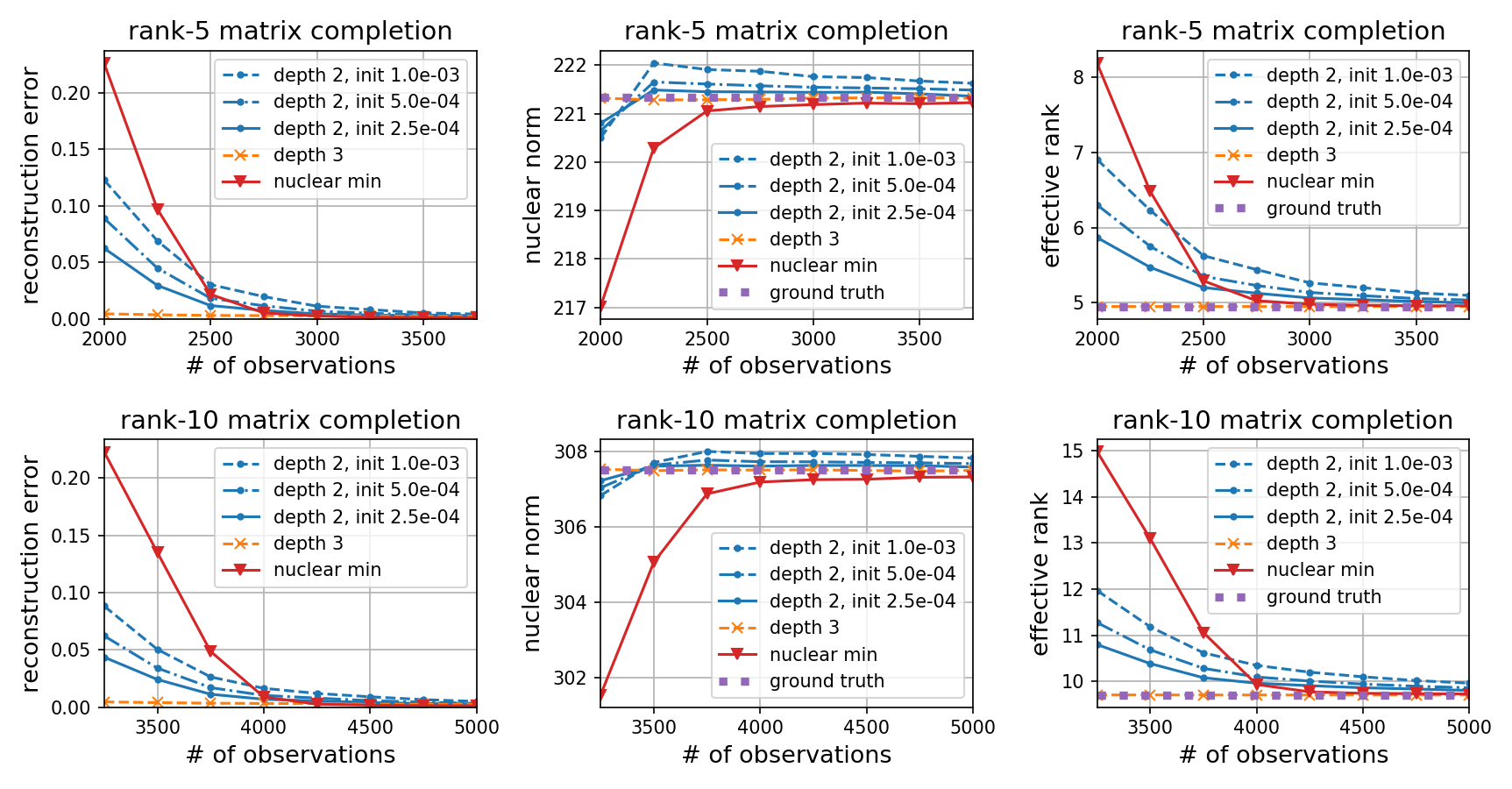}
\end{center}
\caption{
Evaluation of nuclear norm as the implicit regularization in deep matrix factorization.
Each plot compares gradient descent over matrix factorizations of depths $2$ and~$3$ (results for depth~$4$ were indistinguishable from those of depth~$3$; we omit them to reduce clutter) against minimum nuclear norm solution and ground truth in matrix completion tasks. 
Top (respectively, bottom) row corresponds to completion of a random rank-$5$ (respectively, rank-$10$) matrix with size $100 \times 100$.
Left, middle and right columns display (in~$y$-axis) reconstruction error, nuclear norm and effective rank (\cf~\cite{roy2007effective}) respectively.
In each plot, $x$-axis stands for the number of observed entries (randomly chosen), and error bars (indiscernible) mark standard deviations of the results over multiple trials.
All matrix factorizations are full-dimensional, \ie~have hidden dimensions~$100$.
Both learning rate and standard deviation of (random, zero-centered) initialization for gradient descent were initially set to~$10^{-3}$.
Running with smaller learning rate did not yield a noticeable change in terms of final results.
Initializing with smaller standard deviation had no observable effect on results of depth~$3$ (and~$4$), but did impact those of depth~$2$~---~the outcomes of dividing standard deviation by~$2$ and by~$4$ are included in the plots.\protect\footnotemark\,
Notice, with many observed entries minimum nuclear norm solution coincides with ground truth (minimum rank solution), and matrix factorizations of all depths converge to these.
On the other hand, when there are fewer observed entries minimum nuclear norm solution does not coincide with ground truth, and matrix factorizations prefer to lower the effective rank at the expense of higher nuclear norm, in a manner that is more potent for deeper factorizations.
For further details, and a similar experiment on matrix sensing tasks, see Appendix~\ref{app:exper}.
}
\label{fig:exper_norm}
\end{figure}
\notetext{
As can be seen, using smaller initialization enhanced the implicit tendency of depth-$2$ matrix factorization towards low rank.
It is possible that this tendency can eventually match that of depth-$3$ (and~-$4$), but only if initialization size goes far below what is customary in deep learning.
}

Given that Conjecture~\ref{conj:nuclear} seems to hold in some settings (Theorems~\ref{thm:nuclear_mf} and~\ref{thm:nuclear_dmf};~\cite{li2018algorithmic}) but not in other (Figure~\ref{fig:exper_norm}), we hypothesize that capturing implicit regularization in (shallow or deep) matrix factorization through a single mathematical norm (or quasi-norm) may not be possible, and a detailed account for the optimization process might be necessary.
This is carried out in Section~\ref{sec:dyn}.

\section{Dynamical analysis} \label{sec:dyn}

This section characterizes trajectories of gradient flow (gradient descent with infinitesimally small learning rate) on deep matrix factorizations. The characterization significantly extends past analyses for linear neural networks (\eg~\cite{saxe2014exact,arora2018optimization})~---~we derive differential equations governing the dynamics of singular values and singular vectors for the product matrix~$W$ (Equation~\eqref{eq:dmf}). 
Evolution rates of singular values turn out to be proportional to their size exponentiated by~$2-2/N$, where~$N$ is the depth of the factorization. 
For singular vectors, we show that lack of movement implies a particular form of alignment with the gradient, and by this strengthen past results which have only established the converse.
Via theoretical and empirical demonstrations, we explain how our findings imply a tendency towards low-rank solutions, which intensifies with depth.

Our derivation treats a setting which includes matrix completion and sensing as special cases.
We assume minimization of a general analytic loss~$\ell(\cdot)$,\note{
A function~$f(\cdot)$ is \emph{analytic} on a domain~$\D$ if at every $x \in \D$: it is infinitely differentiable; and its Taylor series converges to it on some neighborhood of~$x$ (see~\cite{krantz2002primer} for further details).
} 
overparameterized by a deep matrix factorization:
\vspace{-2mm}
\be
\phi(W_1, \ldots, W_N) := \ell(W_N W_{N-1} \cdots W_1)
\text{\,.}
\label{eq:phi}
\ee
We study gradient flow over the factorization:
\be
\dot{W_j}(t) := \tfrac{d}{dt} W_j(t) = -\tfrac{\partial}{\partial W_j} \phi(W_1(t), \ldots, W_N(t))
\quad,~ t \geq 0 ~,~ j = 1, \ldots, N
\text{\,,}
\label{eq:gf}
\ee
and in accordance with past work, assume that factors are \emph{balanced} at initialization,~\ie:
\be
W_{j + 1}^\top(0) W_{j + 1}(0) = W_j(0) W_j^\top(0)
\quad,~ j = 1, \ldots, N - 1
\text{\,.}
\label{eq:balance}
\ee
Equation~\eqref{eq:balance} is satisfied approximately in the common setting of near-zero initialization (it holds exactly in the ``residual'' setting of identity initialization~---~\cf~\cite{hardt2016identity,bartlett2018gradient}).
The condition played an important role in the analysis of~\cite{arora2018optimization}, facilitating derivation of a differential equation governing the product matrix of a linear neural network (see Lemma~\ref{lem:end_to_end_dynamics} in Appendix~\ref{app:lemmas}).
It was shown in~\cite{arora2018optimization} empirically that there is an excellent match between the theoretical predictions of gradient flow with balanced initialization, and its practical realization via gradient descent with small learning rate and near-zero initialization.
Other works (\eg~\cite{arora2019convergence,ji2019gradient}) later supported this match theoretically.

We note that by Section~6 in~\cite{arora2018optimization}, for depth~$N \geq 3$, the dynamics of the product matrix~$W$ (Equation~\eqref{eq:dmf}) \emph{cannot} be exactly equivalent to gradient descent on the loss~$\ell(\cdot)$ regularized by a penalty term. 
This preliminary observation already hints to the possibility that the effect of depth is different from those of standard regularization techniques.

Employing results of~\cite{arora2018optimization}, we will characterize the evolution of singular values and singular~vectors~for~$W$. 
As a first step, we show that~$W$ admits an \emph{analytic singular value decomposition}~(\cite{bunse1991numerical,de1989analytic}):
\begin{lemma} \label{lemma:asvd}
The product matrix~$W(t)$ can be expressed as:
\be
W(t) = U(t) S(t) V^\top(t)
\text{\,,}
\label{eq:asvd}
\ee
where:
$U(t) \in \R^{d, \min\{d, d'\}}$, $S(t) \in \R^{\min\{d, d'\}, \min\{d, d'\}}$ and $V(t) \in \R^{d', \min\{d, d'\}}$ are analytic functions of~$t$;
and for every~$t$, the matrices $U(t)$~and~$V(t)$ have orthonormal columns, while $S(t)$ is diagonal (elements on its diagonal may be negative and may appear in any order).
\end{lemma}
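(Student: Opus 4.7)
The plan is to reduce the claim to two independent facts: (i) the product matrix $W(t)$ is a real-analytic function of $t$, and (ii) every real-analytic matrix-valued function on an interval admits a real-analytic SVD in the generalized sense specified by the lemma. Fact (ii) is the content of the results cited from \cite{bunse1991numerical,de1989analytic}, so the only work is to verify (i) from the gradient flow system of Equation~\eqref{eq:gf}.

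For (i), I would first argue that the vector field driving the gradient flow is real-analytic in its arguments. The loss $\ell(\cdot)$ is analytic by assumption, and $(W_1,\ldots,W_N)\mapsto W_N W_{N-1}\cdots W_1$ is a polynomial (hence analytic) map into $\R^{d,d'}$. Consequently $\phi=\ell\circ(\text{product})$ is analytic on $\R^{d_1,d_0}\times\cdots\times\R^{d_N,d_{N-1}}$, and so are its partial derivatives $\partial\phi/\partial W_j$. By the classical theorem on analytic ODEs (Cauchy's theorem, proved by majorant/Picard iteration), the unique solution of any autonomous analytic system is itself real-analytic on the maximal interval of existence. Applied to Equation~\eqref{eq:gf}, this yields that each factor $W_j(t)$ is analytic in $t$. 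Since analyticity is preserved under products, the product matrix $W(t)=W_N(t)\cdots W_1(t)$ is analytic in $t$ as well.

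With (i) in hand, I would invoke the analytic SVD theorem from~\cite{bunse1991numerical,de1989analytic}: any real-analytic matrix-valued function $W:\R\to\R^{d,d'}$ admits a factorization $W(t)=U(t)S(t)V^\top(t)$ in which $U(t)\in\R^{d,\min\{d,d'\}}$ and $V(t)\in\R^{d',\min\{d,d'\}}$ have orthonormal columns for every $t$, $S(t)\in\R^{\min\{d,d'\},\min\{d,d'\}}$ is diagonal, and all three factors are real-analytic in $t$. The diagonal entries of $S(t)$ are real but need not be non-negative or ordered by magnitude~---~exactly the two standard SVD conventions that must be relaxed to buy joint analyticity, and exactly the relaxations built into the statement of the lemma. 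Substituting the analytic $W(t)$ into this theorem gives Equation~\eqref{eq:asvd}.

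The main potential obstacle I anticipate is not conceptual but a matter of domain and dimension: the cited analytic SVD results are typically stated for square matrices on an open interval, whereas we need the rectangular case (when $d\neq d'$) and we want the factorization on all of $[0,\infty)$. The rectangular extension is standard (augment with zero rows or columns, decompose, then truncate, with analyticity preserved), and maximality of the interval of existence of the gradient flow can be folded into the statement (the lemma implicitly operates on the interval where $W(t)$ is defined). Neither issue affects the structure of the argument, so once analyticity of $W(t)$ is established the lemma follows directly from the cited analytic SVD theorem.
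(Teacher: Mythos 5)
Your proposal is correct and follows essentially the same route as the paper: establish analyticity of the vector field (via analyticity of $\ell$ composed with the polynomial product map), invoke the classical result that analytic ODEs have analytic solutions (the paper cites Theorem~1.1 of~\cite{ilyashenko2008lectures}, you call it Cauchy's theorem), conclude each $W_j(t)$ and hence $W(t)$ is analytic, and then apply Theorem~1 of~\cite{bunse1991numerical}. The only difference is cosmetic: you make explicit the rectangular/domain caveats that the paper leaves implicit.
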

\begin{proof}[Proof sketch (for complete proof see Appendix~\ref{app:proofs:asvd})]
We show that~$W(t)$ is an analytic function of~$t$ and then invoke Theorem~1 in~\cite{bunse1991numerical}.
\end{proof}
The diagonal elements of~$S(t)$, which we denote by $\sigma_1(t), \ldots, \sigma_{\min\{d, d'\}}(t)$, are signed singular values of~$W(t)$;
the columns of $U(t)$ and~$V(t)$, denoted $\uu_1(t), \ldots, \uu_{\min\{d, d'\}}(t)$ and $\vv_1(t), \ldots, \vv_{\min\{d, d'\}}(t)$, are the corresponding left and right singular vectors (respectively).

\medskip

With Lemma~\ref{lemma:asvd} in place, we are ready to characterize the evolution of singular values:
\begin{theorem} \label{thm:sing_vals_evolve}
The signed singular values of the product matrix~$W(t)$ evolve by:
\be
\dot{\sigma}_r(t) = -N \cdot \big( \sigma_r^2(t) \big)^{1-1/N} \cdot \inprod{\nabla\ell(W(t))}{\uu_r(t) \vv_r^\top(t)}
\quad,~ r = 1, \ldots, \min\{d, d'\}
\text{\,.}
\label{eq:S_evolve}
\ee
If the matrix factorization is non-degenerate, \ie~has depth~$N \geq 2$, the singular values need not be signed (we may assume $\sigma_r(t) \geq 0$ for all~$t$).
\end{theorem}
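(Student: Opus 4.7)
The plan is to combine the end-to-end dynamics of the product matrix from~\cite{arora2018optimization} (Lemma~\ref{lem:end_to_end_dynamics}) with the analytic SVD of Lemma~\ref{lemma:asvd}. Under the balanced initialization~\eqref{eq:balance}, Lemma~\ref{lem:end_to_end_dynamics} gives
\[
\dot{W}(t) = -\sum_{j=1}^{N}\bigl[W(t)W(t)^\top\bigr]^{(N-j)/N}\,\nabla\ell\bigl(W(t)\bigr)\,\bigl[W(t)^\top W(t)\bigr]^{(j-1)/N},
\]
with positive-semidefinite fractional powers read spectrally. First I would use the SVD $W = USV^\top$ to diagonalize each fractional power: orthonormality of the columns of $U$ and $V$ yields $WW^\top = US^2U^\top$ and $W^\top W = VS^2V^\top$, whence $[WW^\top]^{(N-j)/N} = U\,\diag(|\sigma_r|^{2(N-j)/N})\,U^\top$ and analogously for $[W^\top W]^{(j-1)/N}$.

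Next, I would differentiate the scalar identity $\sigma_r(t) = \uu_r(t)^\top W(t) \vv_r(t)$. The product rule produces three terms, but the first and third cancel: from $\uu_r^\top \uu_r \equiv 1$ and $\vv_r^\top \vv_r \equiv 1$ we obtain $\uu_r^\top \dot{\uu}_r \equiv 0$ and $\vv_r^\top \dot{\vv}_r \equiv 0$, and combined with $W \vv_r = \sigma_r \uu_r$ and $\uu_r^\top W = \sigma_r \vv_r^\top$ this leaves $\dot{\sigma}_r = \uu_r^\top \dot{W} \vv_r$. Substituting the dynamics and using the diagonal form above, the $j$-th summand collapses to $|\sigma_r|^{2(N-j)/N}\cdot\inprod{\nabla\ell(W)}{\uu_r \vv_r^\top}\cdot|\sigma_r|^{2(j-1)/N}$. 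The two exponents always sum to $2(N-1)/N$ independently of $j$, so the $N$ summands combine into $-N\,(\sigma_r^2)^{1-1/N}\,\inprod{\nabla\ell(W)}{\uu_r \vv_r^\top}$, which is exactly Equation~\eqref{eq:S_evolve}.

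For the last claim, when $N \geq 2$ the exponent $1-1/N$ is strictly positive, so the right-hand side of~\eqref{eq:S_evolve} vanishes whenever $\sigma_r(t) = 0$; hence no $\sigma_r(\cdot)$ trajectory can change sign. Since the decomposition $W = USV^\top$ is invariant under the gauge $(\uu_r,\sigma_r)\mapsto(-\uu_r,-\sigma_r)$, we may choose $S(0)$ with nonnegative diagonal, and this nonnegativity is preserved for all $t \geq 0$. The step I expect to be most delicate is justifying $\dot{\sigma}_r = \uu_r^\top \dot{W} \vv_r$: the cancellation relies on the scalar identities $\uu_r^\top \dot{\uu}_r = 0$ and $\vv_r^\top \dot{\vv}_r = 0$ together with the eigenrelations $W\vv_r = \sigma_r \uu_r$ and $\uu_r^\top W = \sigma_r \vv_r^\top$, and these are available only because Lemma~\ref{lemma:asvd} supplies an analytic, column-orthonormal SVD valid for all $t$ (not merely for generic $t$ where the singular values are simple).
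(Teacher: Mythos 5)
Your derivation of Equation~\eqref{eq:S_evolve} is correct and takes essentially the same route as the paper: differentiate the analytic SVD to get $\dot{\sigma}_r = \uu_r^\top \dot{W} \vv_r$ (you do this scalarly via $\sigma_r = \uu_r^\top W \vv_r$; the paper differentiates the matrix identity $W = USV^\top$ and reads off the diagonal — equivalent), then substitute the end-to-end dynamics from Lemma~\ref{lem:end_to_end_dynamics}, diagonalize the fractional powers with the SVD, and observe that the exponents in the $j$-th summand always sum to $1-1/N$. Your reindexing of the sum (writing $(N-j)/N$ on the left and $(j-1)/N$ on the right) is harmless.

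One step is shakier than the rest and is the one place where you deviate from the paper's treatment: the claim that because the right-hand side of~\eqref{eq:S_evolve} vanishes when $\sigma_r(t) = 0$, ``no $\sigma_r(\cdot)$ trajectory can change sign.'' That implication does not hold for a general ODE of the form $\dot{s} = |s|^\beta g(t)$: for $\beta < 1$ (e.g.\ $\dot{s} = s^{1/3}$), solutions can pass through zero and change sign even though the right-hand side vanishes there. What makes the conclusion valid here is that the effective exponent is $|\sigma_r|^{2-2/N}$ with $2-2/N \geq 1$ for $N \geq 2$, so the right-hand side is (locally) Lipschitz in $\sigma_r$ near zero, and uniqueness forces the trivial solution once $\sigma_r$ hits zero. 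The paper handles this carefully via Lemma~\ref{lemma:sign_preserve}, which integrates the separable ODE explicitly under the hypothesis $\alpha \geq \tfrac12$ (equivalently $2\alpha \geq 1$) and shows sign preservation directly. You should either invoke that lemma or spell out the Lipschitz/uniqueness argument; as written, the ``vanishes, hence cannot change sign'' step is a genuine, if small, gap.
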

\begin{proof}[Proof sketch (for complete proof see Appendix~\ref{app:proofs:sing_vals_evolve})]
Differentiating the analytic singular value decomposition (Equation~\eqref{eq:asvd}) with respect to time, multiplying from the left by~$U^\top(t)$ and from the right by~$V(t)$, and using the fact that $U(t)$ and~$V(t)$ have orthonormal columns, we obtain $\dot{\sigma}_r(t) = \uu_r^\top(t) \dot{W}(t) \vv_r(t)$.
Equation~\eqref{eq:S_evolve} then follows from plugging in the expression for~$\dot{W}(t)$ developed by~\cite{arora2018optimization} (Lemma~\ref{lem:end_to_end_dynamics} in Appendix~\ref{app:lemmas}).
\end{proof}
Strikingly, given a value for~$W(t)$, the evolution of singular values depends on~$N$~---~depth of the matrix factorization~---~only through the multiplicative factors~$N \cdot (\sigma_r^2(t))^{1-1/N}$ (see Equation~\eqref{eq:S_evolve}).
In the degenerate case~$N = 1$, \ie~when the product matrix~$W(t)$ is simply driven by gradient flow over the loss~$\ell(\cdot)$ (no matrix factorization), the multiplicative factors reduce to~$1$, and the singular values evolve by: $\dot{\sigma}_r(t) = -\inprod{\nabla\ell(W(t))}{\uu_r(t) \vv_r^\top(t)}$.
With $N \geq 2$, \ie~when depth is added to the factorization, the multiplicative factors become non-trivial, and while the constant~$N$ does not differentiate between singular values, the terms~$(\sigma_r^2(t))^{1-1/N}$ do~---~they enhance movement of large singular values, and on the other hand attenuate that of small ones.
Moreover, the enhancement/attenuation becomes more significant as~$N$ (depth of the factorization) grows.

\medskip

Next, we turn to the evolution of singular vectors:
\begin{lemma} \label{lemma:sing_vecs_evolve}
Assume that at initialization, the singular values of the product matrix~$W(t)$ are distinct and different from zero.\note{
This assumption can be relaxed significantly~---~all that is needed is that no singular value be identically zero ($\forall r \, \exists t ~s.t.~ \sigma_r(t) \neq 0$), and no pair of singular values be identical through time ($\forall r, r' \, \exists t ~s.t.~ \sigma_r(t) \neq \sigma_{r'}(t)$).
}
Then, its singular vectors evolve by:
\bea
\dot{U}(t) &=& - U(t) \left( F(t) \odot \left[ U^\top(t) \nabla\ell(W(t)) V(t) S(t) + S(t) V^\top(t) \nabla\ell^\top(W(t)) U(t) \right] \right) 
\nonumber\\
&& \quad - \left( I_d - U(t) U^\top(t) \right) \nabla\ell(W(t)) V(t) ( S^2(t) )^{\frac{1}{2} - \frac{1}{N}} 
\label{eq:U_evolve}
\eea
\\\vspace{-10mm}
\bea
\dot{V}(t) &=& - V(t) \left( F(t) \odot \left[ S(t) U^\top(t) \nabla\ell(W(t)) V(t) + V^\top(t) \nabla\ell^\top(W(t)) U(t) S(t) \right] \right) 
\nonumber\\
&& \quad - \left( I_{d'} - V(t) V^\top(t) \right) \nabla\ell^\top(W(t)) U^\top(t) ( S^2(t) )^{\frac{1}{2} - \frac{1}{N}}
\label{eq:V_evolve}
\text{\,,}
\eea
where $I_d$ and~$I_{d'}$ are the identity matrices of sizes $d \times d$ and~$d' \times d'$ respectively, $\odot$~stands for the Hadamard (element-wise) product, and the matrix~$F(t) \in \R^{\min\{d, d'\}, \min\{d, d'\}}$ is skew-symmetric with $( ( \sigma_{r'}^2(t) )^{1 / N} - ( \sigma_r^2(t) )^{1 / N} )^{- 1}$ in its $(r, r')$'th entry, $r \neq r'$.\note{
Equations~\eqref{eq:U_evolve} and~\eqref{eq:V_evolve} are well-defined when~$t$ is such that $\sigma_1(t), \ldots, \sigma_{\min\{d, d'\}}(t)$ are distinct and different from zero.
By analyticity, this is either the case for every~$t$ besides a set of isolated points, or it is not the case for any~$t$.
Our assumption on initialization disqualifies the latter option, so any~$t$ for which Equations~\eqref{eq:U_evolve} or~\eqref{eq:V_evolve} are ill-defined is isolated.
The derivatives of $U$ and~$V$ for such~$t$ may thus be inferred by continuity.
}
\end{lemma}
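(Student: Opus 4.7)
The plan is to differentiate the analytic SVD of Lemma~\ref{lemma:asvd} and substitute the expression for~$\dot W(t)$ from Lemma~\ref{lem:end_to_end_dynamics}, then isolate $\dot U(t)$ and $\dot V(t)$ by decomposing each into a component in the column span of $U(t)$ or $V(t)$ and a component in its orthogonal complement. Throughout I abbreviate $G := \nabla\ell(W(t))$ and suppress the time argument.

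First, differentiating $W = USV^\top$ yields $\dot W = \dot U SV^\top + U\dot S V^\top + US\dot V^\top$. Orthonormality of the columns of $U$ and $V$ forces $A := U^\top \dot U$ and $B := V^\top \dot V$ to be skew-symmetric, in particular to have zero diagonals. Multiplying by $U^\top$ on the left and $V$ on the right gives $U^\top \dot W V = AS + \dot S - SB$; the diagonal recovers Theorem~\ref{thm:sing_vals_evolve}, while the $(r,r')$ off-diagonal entry reads $A_{rr'}\sigma_{r'} - \sigma_r B_{rr'} = (U^\top \dot W V)_{rr'}$. Interchanging $r \leftrightarrow r'$ supplies a second scalar equation, so for each pair $r \neq r'$ I solve a $2\times 2$ linear system whose determinant equals $\sigma_{r'}^2 - \sigma_r^2$, nonzero by the distinctness hypothesis.

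Next I substitute the end-to-end dynamics $\dot W = -\sum_{j=1}^N (WW^\top)^{(N-j)/N} G (W^\top W)^{(j-1)/N}$. Using $(WW^\top)^\alpha = U(S^2)^\alpha U^\top$ and $(W^\top W)^\alpha = V(S^2)^\alpha V^\top$ for $\alpha>0$, the $(r,r')$ entry of $U^\top \dot W V$ collapses to $-(U^\top G V)_{rr'}\sum_{j=1}^N (\sigma_r^2)^{(N-j)/N}(\sigma_{r'}^2)^{(j-1)/N}$, and the geometric sum evaluates to $(\sigma_r^2 - \sigma_{r'}^2)/((\sigma_r^2)^{1/N} - (\sigma_{r'}^2)^{1/N})$. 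Feeding this into the $2\times 2$ system, the factor $\sigma_{r'}^2 - \sigma_r^2$ in the determinant cancels against the numerator and what remains is precisely $F_{rr'}$. Collecting terms gives $A = -F \odot [U^\top G V S + S V^\top G^\top U]$ and, by a parallel computation, $B = -F \odot [S U^\top G V + V^\top G^\top U S]$, reproducing the Hadamard-product terms of the lemma via $\dot U = UA + (I_d - UU^\top)\dot U$ and $\dot V = VB + (I_{d'} - VV^\top)\dot V$.

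For the normal components I use $\dot W V = \dot U S + U\dot S - USB$ and apply $I_d - UU^\top$ on the left; since $(I_d - UU^\top)U = 0$, this isolates $(I_d - UU^\top)\dot U \, S = (I_d - UU^\top)\dot W V$. Substituting the end-to-end dynamics, every summand with $j < N$ begins with $U(S^2)^{(N-j)/N}U^\top$ and is annihilated by $I_d - UU^\top$; only the $j=N$ term survives, producing $(I_d - UU^\top)\dot W V = -(I_d - UU^\top) G V (S^2)^{(N-1)/N}$. The elementary identity $(S^2)^{(N-1)/N} S^{-1} = (S^2)^{1/2 - 1/N}$ then yields the stated normal component of $\dot U$; the derivation for $\dot V$ is entirely symmetric, starting from $(I_{d'} - VV^\top)\dot W^\top U$.

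The main obstacle is the clean cancellation in the off-diagonal $2\times 2$ system: the determinant $\sigma_{r'}^2 - \sigma_r^2$ must divide out exactly against the numerator supplied by the geometric sum, so that the surviving prefactor depends only on $(\sigma_r^2)^{1/N}$ and $(\sigma_{r'}^2)^{1/N}$ rather than on the bare squared singular values, thereby packaging everything into the single skew-symmetric matrix $F$. A secondary technical point, already flagged by the lemma's footnote, is that $S^{-1}$ and the entries of $F$ become singular at isolated times where some $\sigma_r$ vanishes or two $\sigma_r, \sigma_{r'}$ coincide; on the open dense complement of that set the calculation above is valid, and the derivatives at the exceptional instants are obtained by continuity of $\dot U(t)$ and $\dot V(t)$.
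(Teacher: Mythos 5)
Your proposal is correct and follows essentially the same route as the paper's proof: differentiate the analytic SVD, project onto the column span of $U$ (resp.\ $V$) and its orthogonal complement, substitute the end-to-end dynamics from Lemma~\ref{lem:end_to_end_dynamics}, and let the geometric sum $\sum_{j=1}^N(\sigma_r^2)^{(N-j)/N}(\sigma_{r'}^2)^{(j-1)/N}=(\sigma_r^2-\sigma_{r'}^2)/((\sigma_r^2)^{1/N}-(\sigma_{r'}^2)^{1/N})$ cancel against the $\sigma_{r'}^2-\sigma_r^2$ that arises when isolating $U^\top\dot U$ and $V^\top\dot V$. The only superficial difference is that you solve a $2\times2$ linear system entrywise for $(A_{rr'},B_{rr'})$ using the skew-symmetry of $A=U^\top\dot U$ and $B=V^\top\dot V$, whereas the paper instead multiplies the off-diagonal relation and its transpose by $S$ on opposite sides and adds, producing the commutator equation $U^\top\dot U S^2-S^2 U^\top\dot U$; both manipulations are equivalent and use the same nondegeneracy $\sigma_r^2\neq\sigma_{r'}^2$. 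The handling of the normal component and the continuity argument for isolated exceptional times also match the paper.
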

\begin{proof}[Proof sketch (for complete proof see Appendix~\ref{app:proofs:sing_vecs_evolve})]
We follow a series of steps adopted from~\cite{townsend2016differentiating} to obtain expressions for $\dot{U}(t)$ and~$\dot{V}(t)$ in terms of $U(t)$, $V(t)$, $S(t)$ and~$\dot{W}(t)$.
Plugging in the expression for~$\dot{W}(t)$ developed by~\cite{arora2018optimization} (Lemma~\ref{lem:end_to_end_dynamics} in Appendix~\ref{app:lemmas}) then yields Equations~\eqref{eq:U_evolve},~\eqref{eq:V_evolve}.
\end{proof}
\begin{corollary} \label{cor:sing_vecs_station}
Assume the conditions of Lemma~\ref{lemma:sing_vecs_evolve}, and that the matrix factorization is non-degenerate, \ie~has depth~$N \geq 2$.
Then, for any time~$t$ such that the singular vectors of the product matrix~$W(t)$ are stationary, \ie~$\dot{U}(t) = 0$ and $\dot{V}(t) = 0$, it holds that $U^\top(t) \nabla\ell(W(t)) V(t)$ is diagonal, meaning they align with the singular vectors of $\nabla\ell(W(t))$.
\end{corollary}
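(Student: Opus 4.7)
My plan is to read the conclusion off Equations~\eqref{eq:U_evolve} and~\eqref{eq:V_evolve} by exploiting an orthogonal decomposition built into each right-hand side. In Equation~\eqref{eq:U_evolve} the first summand $U(t)\bigl(F(t)\odot[\cdots]\bigr)$ lies in the column space of $U(t)$, whereas the second, with its $I_d - U(t)U^\top(t)$ prefix, lies in the orthogonal complement. Hence $\dot U(t)=0$ forces each summand to vanish independently; left-multiplying the first by $U^\top(t)$ (using $U^\top(t) U(t) = I$) then yields the Hadamard identity
\begin{equation*}
F(t) \odot \bigl[U^\top(t)\nabla\ell(W(t))V(t)S(t) + S(t)V^\top(t)\nabla\ell^\top(W(t))U(t)\bigr] = 0,
\end{equation*}
and the identical manoeuvre applied to Equation~\eqref{eq:V_evolve} (now using $V^\top(t) V(t)= I$) gives a twin identity with the roles of $U$ and $V$ swapped inside the bracket. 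The two ``outer'' pieces, living in the orthogonal complements, are not needed for the conclusion and I will discard them.

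Setting $A(t) := U^\top(t)\nabla\ell(W(t))V(t)$, a direct computation shows that the $(r,r')$ entry of $A(t)S(t)+S(t)A^\top(t)$ equals $\sigma_{r'}(t)A_{r,r'}(t) + \sigma_r(t)A_{r',r}(t)$. Under the assumptions inherited from Lemma~\ref{lemma:sing_vecs_evolve}, the off-diagonal entries of $F(t)$ are finite and non-zero, so the first Hadamard identity forces, for every $r \neq r'$,
\begin{equation*}
\sigma_{r'}(t)\,A_{r,r'}(t) + \sigma_r(t)\,A_{r',r}(t) = 0,
\end{equation*}
and the twin identity contributes $\sigma_r(t)A_{r,r'}(t)+\sigma_{r'}(t)A_{r',r}(t)=0$. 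Together these form a $2\times 2$ linear system in $\bigl(A_{r,r'}(t),A_{r',r}(t)\bigr)$ with determinant $\sigma_{r'}^2(t)-\sigma_r^2(t)$. By Theorem~\ref{thm:sing_vals_evolve} the singular values can be chosen non-negative for $N \geq 2$, and combined with distinctness this determinant is non-zero, so $A_{r,r'}(t)=A_{r',r}(t)=0$ for all $r \neq r'$, i.e.\ $U^\top(t)\nabla\ell(W(t))V(t)$ is diagonal.

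The principal subtlety I anticipate is not algebraic but concerns domains. The Hadamard identities above are only meaningful at times $t$ where $F(t)$ is finite, which fails wherever two singular values coincide in absolute value or one of them vanishes; by the note following Lemma~\ref{lemma:sing_vecs_evolve} the exceptional set of such times is isolated under the stated initial condition, so the argument is carried out at regular $t$ and extended to the exceptional stationary times by continuity. It is also worth remarking why $N \geq 2$ is genuinely used: non-negativity of the signed singular values is exactly what upgrades their distinctness to distinctness of the $\sigma_r^2(t)$, which in turn is what makes the $2\times 2$ determinant non-zero and rules out the possibility $\sigma_r = -\sigma_{r'}$ that could otherwise arise from the sign ambiguity in the analytic SVD.
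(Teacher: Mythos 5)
Your argument at ``regular'' times $t$ (where all $\sigma_r(t)$ are nonzero and all $\sigma_r^2(t)$ are distinct) is correct and close in spirit to the paper's: you left-multiply Equations~\eqref{eq:U_evolve} and~\eqref{eq:V_evolve} by $U^\top(t)$ and $V^\top(t)$ to isolate the Hadamard brackets, and you read off a $2\times 2$ linear system in $(A_{r,r'},A_{r',r})$ with determinant $\sigma_{r'}^2(t)-\sigma_r^2(t)$. The paper instead right-multiplies the $U$-identity by $S(t)$, left-multiplies the $V$-identity by $S(t)$, and subtracts, which collapses the two equations into the single Equation~\eqref{eq:station_diag} whose off-diagonal coefficients are $G_{r,r'}(t) = \sum_{j=1}^N (\sigma_r^2)^{(j-1)/N}(\sigma_{r'}^2)^{(N-j)/N}$. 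That choice of combination is not cosmetic: $G_{r,r'}(t)$ vanishes only if $\sigma_r(t)=\sigma_{r'}(t)=0$, whereas your determinant $\sigma_{r'}^2(t)-\sigma_r^2(t)$ also vanishes whenever the squared singular values \emph{coincide}, which is precisely the exceptional configuration the lemma's footnote admits at isolated times.

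This is where your ``extend by continuity'' step breaks. What continuity legitimately gives you (after multiplying the Hadamard identities through by the reciprocal of $F_{r,r'}$, which is a continuous function) is that $\sigma_{r'}(t_0)A_{r,r'}(t_0)+\sigma_r(t_0)A_{r',r}(t_0)=0$ and $\sigma_r(t_0)A_{r,r'}(t_0)+\sigma_{r'}(t_0)A_{r',r}(t_0)=0$ hold even at an exceptional stationary time $t_0$. But if $\sigma_r^2(t_0)=\sigma_{r'}^2(t_0)\neq 0$ (so, with non-negativity, $\sigma_r(t_0)=\sigma_{r'}(t_0)$), the system is rank one and only yields $A_{r,r'}(t_0)=-A_{r',r}(t_0)$, not $A_{r,r'}(t_0)=0$. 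You cannot recover the conclusion by continuity of $A_{r,r'}$ either, since nearby times need not be stationary. The paper avoids this by working with $\bar I \odot G \odot A = 0$, where $G_{r,r'}(t_0)\neq 0$ as long as $\sigma_r(t_0)$ and $\sigma_{r'}(t_0)$ are not both zero; that residual possibility is then excluded by invoking Lemma~\ref{lemma:sign_preserve} for $N\geq 2$ to show $\sigma_r(t)\neq 0$ for all $t$.

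Relatedly, your explanation of where $N\geq 2$ is ``genuinely used'' is somewhat off target. The role you assign to it (non-negativity upgrading distinctness to distinct squares) concerns the interpretation of the lemma's hypothesis at $t=0$, which the lemma's own proof already reads as $\sigma_r^2(0)\neq\sigma_{r'}^2(0)$. The paper's actual use of $N\geq 2$ is to guarantee $1-1/N\geq 1/2$ so that Lemma~\ref{lemma:sign_preserve} applies to Equation~\eqref{eq:S_evolve}, preserving $\sigma_r(t)\neq 0$ for all $t$ and thereby keeping $G_{r,r'}(t)$ nonzero at the exceptional times. To close the gap in your proof, replace the $2\times 2$ system with the paper's subtracted combination (or, equivalently, note that the two Hadamard brackets can be combined into one with coefficient $G_{r,r'}$), and then supply the sign-preservation step.
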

\begin{proof}[Proof sketch (for complete proof see Appendix~\ref{app:proofs:sing_vecs_station})]
By Equations~\eqref{eq:U_evolve} and~\eqref{eq:V_evolve}, $U^\top(t) \dot{U}(t) S(t) - S(t) V^\top(t) \dot{V}(t)$ is equal to the Hadamard product between $U^\top(t) \nabla\ell(W(t)) V(t)$ and a (time-dependent) square matrix with zeros on its diagonal and non-zeros elsewhere.
When $\dot{U}(t) = 0$ and $\dot{V}(t) = 0$ obviously $U^\top(t) \dot{U}(t) S(t) - S(t) V^\top(t) \dot{V}(t) = 0$, and so the Hadamard product is zero.
This implies that $U^\top(t) \nabla\ell(W(t)) V(t)$ is diagonal.
\end{proof}
Earlier papers studying gradient flow for linear neural networks (\eg~\cite{saxe2014exact,advani2017high,lampinen2019analytic}) could show that singular vectors are stationary if they align with the singular vectors of the gradient.
Corollary~\ref{cor:sing_vecs_station} is significantly stronger and implies a converse~---~if singular vectors are stationary, they must be aligned with the gradient.
Qualitatively, this suggests that a ``goal'' of gradient flow on a deep matrix factorization is to align singular vectors of the product matrix with those of the gradient. 

\subsection{Implicit regularization towards low rank}

Figure~\ref{fig:exper_dyn} presents empirical demonstrations of our conclusions from Theorem~\ref{thm:sing_vals_evolve} and Corollary~\ref{cor:sing_vecs_station}.
It shows that for a non-degenerate deep matrix factorization, \ie~one with depth~$N \geq 2$, under gradient descent with small learning rate and near-zero initialization, singular values of the product matrix are subject to an enhancement/attenuation effect as described above: 
they progress very slowly after initialization, when close to zero;
then, upon reaching a certain threshold, the movement of a singular value becomes rapid, with the transition from slow to rapid movement being sharper with a deeper factorization (larger~$N$).
In terms of singular vectors, the figure shows that those of the product matrix indeed align with those of the gradient.
Overall, the dynamics promote solutions that have a few large singular values and many small ones, with a gap that is more extreme the deeper the matrix factorization is.
This is an implicit regularization towards low rank, which intensifies with depth.

\begin{figure}
\begin{center}
\includegraphics[width=\textwidth]{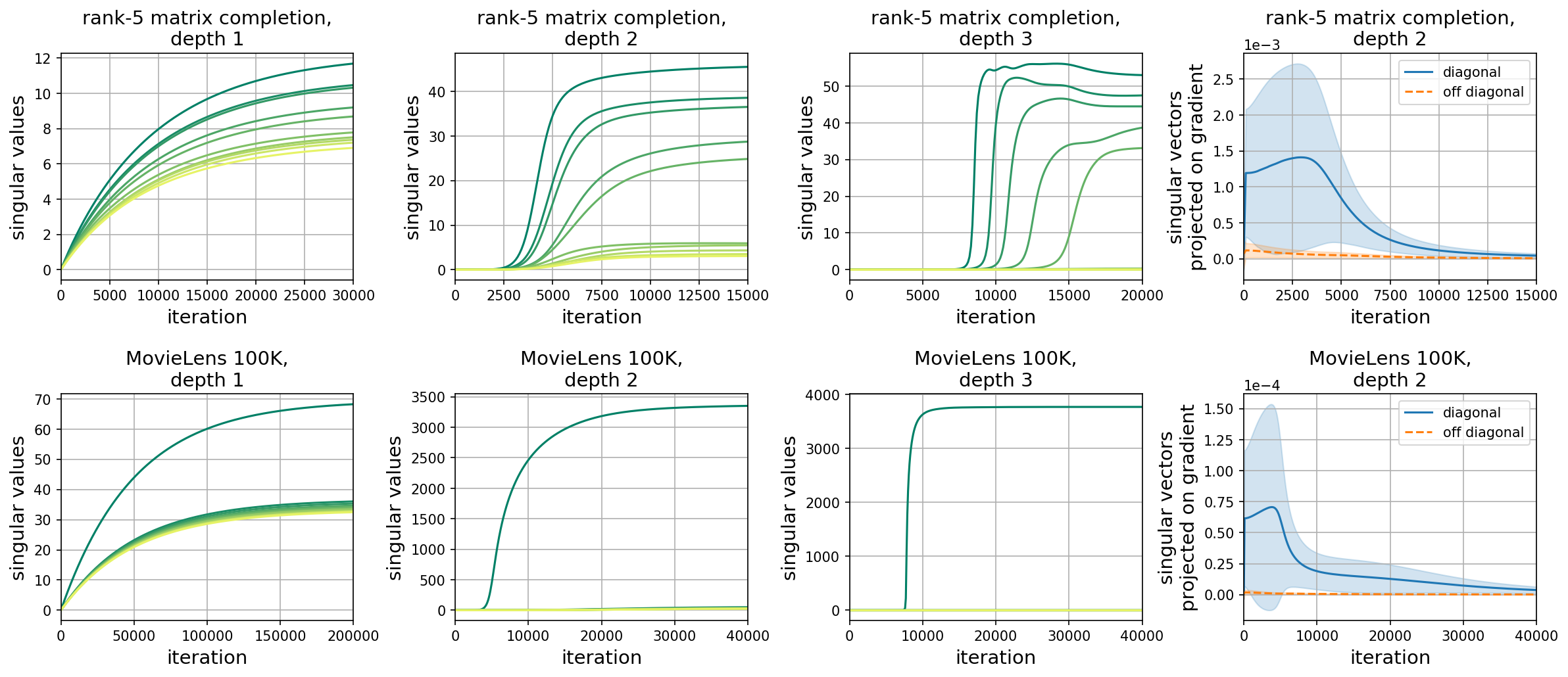}
\end{center}
\caption{
Dynamics of gradient descent over deep matrix factorizations~---~specifically, evolution of singular values and singular vectors of the product matrix during training for matrix completion.
Top row corresponds to the task of completing a random rank-$5$ matrix with size $100 \times 100$ based on $2000$ randomly chosen observed entries; bottom row corresponds to training on $10000$ entries chosen randomly from the MovieLens~100K dataset (completion of a~$943 \times 1682$~matrix, \cf~\cite{harper2016movielens}).\protect\footnotemark\,
First (left) three columns show top singular values for, respectively, depths~$1$~(no matrix factorization), $2$~(shallow matrix factorization) and~$3$~(deep matrix factorization).
Last (right) column shows singular vectors for a depth-$2$ factorization, by comparing on-~\vs~off-diagonal entries in the matrix $U^\top(t) \nabla\ell(W(t)) V(t)$ (see Corollary~\ref{cor:sing_vecs_station})~---~for each group of entries, mean of absolute values is plotted, along with shaded area marking the standard deviation.
All matrix factorizations are full-dimensional (hidden dimensions $100$ in top row plots, $943$~in bottom row plots).
Notice, increasing depth makes singular values move slower when small and faster when large (in accordance with Theorem~\ref{thm:sing_vals_evolve}), which results in solutions with effectively lower rank.
Notice also that $U^\top(t) \nabla\ell(W(t)) V(t)$ is diagonally dominant so long as there is movement, showing that singular vectors of the product matrix align with those of the gradient (in accordance with Corollary~\ref{cor:sing_vecs_station}).
For further details, and a similar experiment on matrix sensing, see Appendix~\ref{app:exper}.
}
\label{fig:exper_dyn}
\end{figure}
\notetext{
Observations of MovieLens~100K were subsampled solely for reducing run-time.
}

\paragraph{Theoretical illustration}
Consider the simple case of square matrix sensing with a single measurement fit via $\ell_2$~loss: $\ell(W) = \frac{1}{2} (\inprod{A}{W} - y)^2$, where $A \in \R^{d, d}$ is the measurement matrix, and $y \in \R$ the corresponding label.
Suppose we learn by running gradient flow over a depth-$N$ matrix factorization, \ie~over the objective~$\phi(\cdot)$ defined in Equation~\eqref{eq:phi}.
Corollary~\ref{cor:sing_vecs_station} states that the singular vectors of the product matrix~---~$\{\uu_r(t)\}_r$ and~$\{\vv_r(t)\}_r$~---~are stationary only when they diagonalize the gradient, meaning $\left\{ | \uu_r^\top(t) \nabla\ell(W(t)) \vv_r | : r = 1, \ldots, d \right\}$ coincides with the set of singular values in~$\nabla\ell(W(t))$.
In our case $\nabla\ell(W) = (\inprod{A}{W} - y) A$, so stationarity of singular vectors implies $| \uu_r^\top(t) \nabla\ell(W(t)) \vv_r | = | \delta(t) | \cdot \rho_r$, where $\delta(t) := \inprod{A}{W(t)} - y$ and $\rho_1, \ldots, \rho_d$ are the singular values of~$A$ (in no particular order).
We will assume that starting from some time~$t_0$ singular vectors are stationary, and accordingly $\uu_r^\top(t) \nabla\ell(W(t)) \vv_r(t) = \delta(t) \cdot e_r \cdot \rho_r$ for $r = 1, \ldots, d$, where $e_1, \ldots, e_d \in \{-1, 1\}$.
Theorem~\ref{thm:sing_vals_evolve} then implies that (signed) singular values of~the~product~matrix~evolve~by: 
\be
\dot{\sigma}_r(t) = -N \cdot \big( \sigma_r^2(t) \big)^{1-1/N} \cdot \delta(t) \cdot e_r \cdot \rho_r
\quad,~ \forall t \geq t_0
\text{\,.}
\label{eq:sing_vals_evolve_examp}
\ee
Let $r_1, r_2 \in \{1, \ldots, d\}$.
By Equation~\eqref{eq:sing_vals_evolve_examp}:
\[
\int_{t' = t_0}^{t} \big( \sigma_{r_1}^2(t') \big)^{- 1 + 1/N} \dot{\sigma}_{r_1}(t') dt' = \frac{e_{r_1} \rho_{r_1}}{e_{r_2} \rho_{r_2}} \cdot \int_{t' = t_0}^{t} \big( \sigma_{r_2}^2(t') \big)^{- 1 + 1/N} \dot{\sigma}_{r_2}(t') dt'
\text{\,.}
\]
Computing the integrals, we may express~$\sigma_{r_1}(t)$ as a function of~$\sigma_{r_2}(t)$:\note{
In accordance with Theorem~\ref{thm:sing_vals_evolve}, if $N \geq 2$, we assume without loss of generality that $\sigma_{r_1}(t), \sigma_{r_2}(t) \geq 0$, while disregarding the trivial case of equality to zero.
}
\be
\sigma_{r_1}(t) =
\begin{cases}
\alpha_{r_1, r_2} \cdot \sigma_{r_2}(t) + const & , N = 1 \\[1mm]
\big( \sigma_{r_2}(t) \big)^{\alpha_{r_1, r_2}} \cdot const & , N = 2 \\[-0.5mm]
\big( \alpha_{r_1, r_2} \cdot ( \sigma_{r_2}(t) )^{- \frac{N - 2}{N}} + const \big)^{-\frac{N}{N - 2}} & , N \geq 3
\end{cases}
\text{\,,}
\label{eq:sing_vals_depend_examp}
\ee
where~$\alpha_{r_1, r_2} := e_{r_1} \rho_{r_1} ( e_{r_2} \rho_{r_2} )^{-1}$, and $const$ stands for a value that does not depend on~$t$.
Equation~\ref{eq:sing_vals_depend_examp} reveals a gap between~$\sigma_{r_1}(t)$ and~$\sigma_{r_2}(t)$ that enhances with depth.
For example, consider the case where $0 < \alpha_{r_1, r_2} < 1$.
If the depth~$N$ is one, \ie~the matrix factorization is degenerate, $\sigma_{r_1}(t)$~will grow linearly with~$\sigma_{r_2}(t)$.
If~$N = 2$~---~shallow matrix factorization~---~$\sigma_{r_1}(t)$~will grow polynomially more slowly than~$\sigma_{r_2}(t)$ ($const$~here is positive).
Increasing depth further will lead $\sigma_{r_1}(t)$ to asymptote when $\sigma_{r_2}(t)$ grows, at a value which can be shown to be lower the larger~$N$~is.
Overall, adding depth to the matrix factorization leads to more significant gaps between singular values of the product matrix, \ie~to a stronger implicit bias towards low rank.

\section{Related work} \label{sec:related}

Implicit regularization in deep learning is a highly active area of research.
For non-linear neural networks, the topic has thus far been studied empirically (\eg~in~\cite{neyshabur2014search,zhang2017understanding,keskar2017large,hoffer2017train,neyshabur2017exploring}), with theoretical analyses being somewhat scarce (see~\cite{du2018algorithmic,rahaman2018spectral} for some of the few observations that have been derived).
The majority of theoretical attention has been devoted to (single-layer) linear predictors and (multi-layer) linear neural networks, often viewed as stepping stones towards non-linear models.
Linear predictors were treated in~\cite{lin2016generalization,soudry2018implicit,nacson2019convergence,gunasekar2018characterizing}.
For linear neural networks, \cite{advani2017high,lampinen2019analytic,gidel2019implicit}~studied settings where the training objective admits a single global minimum, and the question is what path gradient descent (or gradient flow) takes to reach it.\note{
\cite{advani2017high}~and~\cite{gidel2019implicit} also considered settings where there are multiple global minima, but in these too there was just one solution to which optimization could converge, leaving only the question of what path is taken to reach it.
}
This stands in contrast to the practical deep learning scenario where there are multiple global minima, and implicit regularization refers to the optimizer being biased towards reaching those solutions that generalize well.
The latter scenario was treated by \cite{gunasekar2018implicit} and~\cite{ji2019gradient} in the context of linear neural networks trained for binary classification via separable data.
These works showed that under certain assumptions, gradient descent converges (in direction) to the maximum margin solution.
Intriguingly, the bias towards maximum margin holds with any number of layers, so in particular, implicit regularization was found to be oblivious~to~depth.\note{
In addition to standard linear neural networks, \cite{gunasekar2018implicit}~also analyzed ``linear convolutional networks'', characterized by a particular weight sharing pattern.
For such models, the implicit regularization was found to promote sparsity in the frequency domain, in a manner which does depend on depth.
}

The most extensively studied instance of linear neural networks is matrix factorization, corresponding to a model with multiple inputs, multiple outputs and a single hidden layer, typically trained to recover a low-rank linear mapping.
The literature on matrix factorization for low-rank matrix recovery is far too broad to cover here~---~we refer to~\cite{chi2018nonconvex} for a recent survey, while mentioning that the technique is oftentimes attributed to~\cite{burer2003nonlinear}.
Notable works proving successful recovery of a low-rank matrix through matrix factorization trained by gradient descent with no explicit regularization are~\cite{tu2016low,ma2018implicit,li2018algorithmic}.
Of these, \cite{li2018algorithmic}~can be viewed as resolving the conjecture of~\cite{gunasekar2017implicit}~---~which we investigate in Section~\ref{sec:norm}~---~for the case of sufficiently many linear measurements satisfying the restricted isometry property.

To the best of our knowledge, the current paper is the first to study implicit regularization for deep (three or more layer) linear neural networks with multiple outputs.
The latter trait seems to be distinctive, as it is the main differentiator between the setting of~\cite{gunasekar2018implicit,ji2019gradient}, where implicit regularization is oblivious to depth, and ours, for which we show that depth has significant impact.
We note that our work is focused on the type of solutions reached by gradient descent, not the complementary questions of whether an optimal solution is found, and how fast that happens.
These questions were studied extensively for matrix factorization~---~\cf~\cite{ge2016matrix,bhojanapalli2016global,park2017nonsquare,ge2017no}~---~and more recently for linear neural networks of arbitrary depth~---~see~\cite{bartlett2018gradient,arora2018optimization,arora2019convergence,du2019width}.
From a technical perspective, closest to our work are~\cite{gunasekar2017implicit} and~\cite{arora2018optimization}~---~we rely on their results and significantly extend them (see Sections~\ref{sec:norm} and~\ref{sec:dyn}).

\section{Conclusion} \label{sec:conclu}

The implicit regularization of gradient-based optimization is key to generalization in deep learning.
As a stepping stone towards understanding this phenomenon, we studied deep linear neural networks for matrix completion and sensing, a model referred to as deep matrix factorization.
Through theory and experiments, we questioned prevalent norm-based explanations for implicit regularization in matrix factorization (\cf~\cite{gunasekar2017implicit}), and offered an alternative, dynamical approach.
Our characterization of the dynamics induced by gradient flow on the singular value decomposition of the learned matrix significantly extends prior work on linear neural networks.
It reveals an implicit tendency towards low rank which intensifies with depth, supporting the empirical superiority of deeper matrix factorizations.

An emerging view is that understanding optimization in deep learning necessitates a detailed account for the trajectories traversed in training (\cf~\cite{arora2019convergence}). 
Our work adds another dimension to the potential importance of trajectories~---~we believe they are necessary for understanding generalization as well, and in particular, may be key to analyzing implicit regularization for non-linear neural networks.

\newcommand{\ack}
{This work was supported by NSF, ONR, Simons Foundation, Schmidt Foundation, Mozilla Research, Amazon Research, DARPA and SRC.
Nadav Cohen was a member at the Institute for Advanced Study, and was additionally supported by the Zuckerman Israeli Postdoctoral Scholars Program.
The authors thank Nathan Srebro for illuminating discussions which helped improve the paper.}
\ifdefined\COLT
	\acks{\ack}
\else
	\ifdefined\CAMREADY
		\ifdefined\ICLR
			\newcommand*{\subsuback}{}
		\fi
		\ifdefined\NEURIPS
			\newcommand*{\subsuback}{}
		\fi
		\ifdefined\subsuback		
			\subsubsection*{Acknowledgments}
		\else
			\section*{Acknowledgments}
		\fi
		\ack
	\fi
\fi

\section*{References}
{\small
\ifdefined\ICML
	\bibliographystyle{icml2018}
\else
	\bibliographystyle{plainnat}
\fi
\bibliography{refs.bib}
}

\clearpage
\appendix

\ifdefined\NOTESAPP
	\theendnotes
\fi

\section{Useful lemmas} \label{app:lemmas}

We recall the following result from~\cite{arora2018optimization}, which characterizes the evolution of the product matrix under gradient flow on a deep matrix factorization:
\begin{lemma}[adaptation of Theorem~1 in~\cite{arora2018optimization}] \label{lem:end_to_end_dynamics}
Let $\ell : \R^{d, d'} \to \R_{\geq 0}$ be a continuously differentiable loss, overparameterized by a deep matrix factorization:
\[
\phi(W_1, \ldots, W_N) = \ell(W_N W_{N-1} \cdots W_1)
\text{\,.}
\]
Suppose we run gradient flow over the factorization:
\[
\dot{W_j}(t) := \tfrac{d}{dt} W_j(t) = -\tfrac{\partial}{\partial W_j} \phi(W_1(t), \ldots, W_N(t))
\quad,~ t \geq 0 ~,~ j = 1, \ldots, N
\text{\,,}
\]
with factors initialized to be balanced,~\ie:
\[
W_{j + 1}^\top(0) W_{j + 1}(0) = W_j(0) W_j^\top(0)
\quad,~ j = 1, \ldots, N - 1
\text{\,.}
\]
Then, the product matrix $W(t) = W_N(t) \cdots W_1(t)$ obeys the following dynamics:
\[
\dot{W}(t) = -\sum\nolimits_{j = 1}^N \left[ W(t) W^\top(t) \right]^\frac{j - 1}{N} \cdot \nabla \ell\big(W(t)\big) \cdot \left[ W^\top(t) W(t) \right]^\frac{N - j}{N}
\text{\,,}
\]
where~$[\,\cdot\,]^\alpha$, $\alpha \in \R_{\geq 0}$, stands for a power operator defined over positive semidefinite matrices (with $\alpha = 0$ yielding identity by definition).
\end{lemma}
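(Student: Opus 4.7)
My plan is to follow the standard three-step template: (i) differentiate the product using the chain rule, (ii) show that the balancedness condition at initialization persists along the flow, and (iii) exploit balancedness to rewrite the product of layer outer-products as fractional powers of $WW^\top$ and $W^\top W$. For (i), since $W(t) = W_N(t) \cdots W_1(t)$, the ordinary product rule yields
\[
\dot{W}(t) \;=\; \sum\nolimits_{j=1}^{N} W_{j+1:N}(t) \, \dot{W}_j(t) \, W_{1:j-1}(t),
\]
where I write $W_{a:b} := W_b W_{b-1} \cdots W_a$ (empty products equal the identity of appropriate size). A direct computation from the factorized form of $\phi$ gives $\frac{\partial \phi}{\partial W_j} = W_{j+1:N}^\top \nabla \ell(W) \, W_{1:j-1}^\top$, so substituting $\dot{W}_j = -\partial \phi/\partial W_j$ produces
\[
\dot{W}(t) \;=\; -\sum\nolimits_{j=1}^{N} \bigl(W_{j+1:N}(t) W_{j+1:N}^\top(t)\bigr) \, \nabla \ell\bigl(W(t)\bigr) \, \bigl(W_{1:j-1}^\top(t) W_{1:j-1}(t)\bigr).
\]

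For step (ii), I would verify $\frac{d}{dt}\bigl[W_{j+1}^\top W_{j+1} - W_j W_j^\top\bigr] = 0$ along the flow. A short calculation using the explicit form of $\dot{W}_j$ and $\dot{W}_{j+1}$ shows that $\dot{W}_j W_j^\top = W_{j+1}^\top \dot{W}_{j+1}$ (both equal $-W_{j+1:N}^\top \nabla\ell(W) W_{1:j}^\top$), and likewise $W_j \dot{W}_j^\top = \dot{W}_{j+1}^\top W_{j+1}$. Combining, the time derivative vanishes identically, so the balancedness assumption at $t=0$ propagates to all $t \geq 0$.

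Step (iii) is where the proof does real work. I claim that balancedness forces
\[
W_{1:m}^\top(t) W_{1:m}(t) = \bigl[W^\top(t) W(t)\bigr]^{m/N}, \qquad W_{m+1:N}(t) W_{m+1:N}^\top(t) = \bigl[W(t) W^\top(t)\bigr]^{(N-m)/N},
\]
for every $0 \le m \le N$. To see this, pick an SVD $W_j = U_j \Sigma_j V_j^\top$ of each factor; the balancedness identity $W_{j+1}^\top W_{j+1} = W_j W_j^\top$ says the two PSD matrices $V_{j+1} \Sigma_{j+1}^2 V_{j+1}^\top$ and $U_j \Sigma_j^2 U_j^\top$ agree, which (after choosing consistent bases in eigenspaces, using a standard argument to handle repeated singular values) lets me take $\Sigma_{j+1} = \Sigma_j =: \Sigma$ and $V_{j+1} = U_j$. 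Telescoping collapses products: $W_{1:m} = U_m \Sigma^m V_1^\top$ and $W_{m+1:N} = U_N \Sigma^{N-m} U_m^\top$. Taking the appropriate outer products and observing that $W = U_N \Sigma^N V_1^\top$ gives the two power identities immediately, since $(WW^\top)^{(N-m)/N} = U_N \Sigma^{2(N-m)} U_N^\top$ and similarly for the other side.

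Substituting the identities from (iii) back into the expression from (i), and re-indexing the sum by $k = N - j + 1$ so that $j - 1 \mapsto N - k$ and $N - j \mapsto k - 1$, produces exactly the claimed formula. The main obstacle is the SVD-matching argument in step (iii) in the presence of coinciding or vanishing singular values, where the choice of $U_j, V_j$ is non-unique; I would handle this by either working with the spectral decomposition of $W_j W_j^\top$ (which depends only on eigenspaces, not on individual eigenvector choices), or by a perturbation/continuity argument from the generic distinct-singular-value case. Everything else is routine algebra and application of the product rule.
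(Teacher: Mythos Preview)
Your proof is correct and follows exactly the argument of the cited reference~\cite{arora2018optimization}, but note that the present paper does not actually prove this lemma: it is stated in Appendix~\ref{app:lemmas} as a result \emph{recalled} from~\cite{arora2018optimization} (Theorem~1 there), with no proof given. So there is no ``paper's own proof'' to compare against; you have supplied the standard one. The only points worth flagging are (a) in step~(iii) the factors $W_j \in \R^{d_j \times d_{j-1}}$ need not be square or share dimensions, so your ``$\Sigma_{j+1} = \Sigma_j =: \Sigma$'' should be read as equality of the nonzero singular values with appropriate zero-padding, and (b) your handling of repeated/zero singular values via eigenspace-level reasoning is indeed the clean way to close the gap you identify. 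Both are routine.
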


\medskip

An additional result we will use is the following technical lemma:
\begin{lemma} \label{lemma:sign_preserve}
Let $\alpha \,{\geq}\, \frac{1}{2}$ and $g \,{:}\, [0, \infty) {\to}\, \R$ be a continuous function.
Consider the initial value problem:
\vspace{-1.5mm}
\be
s(0) = s_0
~~~,~~~ \dot{s}(t) = (s^2(t))^\alpha \cdot g(t) 
~~~ \forall t \geq 0
\text{\,,}
\label{eq:sign_preserve}
\ee
\vspace{-3mm}\\
where~$s_0 \in \R$.
Then, as long as it does not diverge to~$\pm\infty$, the solution to this problem~($s(t)$) has the same sign as its initial value~($s_0$).
That is, $s(t)$~is identically zero if $s_0 = 0$, is positive if $s_0 > 0$, and is negative if $s_0 < 0$.
\end{lemma}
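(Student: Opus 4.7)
The plan is to establish the result via Picard--Lindel\"of uniqueness applied to the factored form $(s^2)^\alpha = |s|^{2\alpha}$ of the right-hand side, combined with the trivial fact that $s \equiv 0$ always solves the ODE when initialized at zero. The proof will consist of three stages: verifying local Lipschitz regularity of the RHS in $s$, handling the $s_0 = 0$ case by uniqueness, and reducing the nonzero cases to the zero case via a time-shifted contradiction argument.

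First I would verify that $F(s, t) := |s|^{2\alpha}\, g(t)$ is locally Lipschitz in $s$ uniformly on compact time intervals. This is precisely where the hypothesis $\alpha \geq 1/2$ enters. For $\alpha = 1/2$, the map $s \mapsto |s|$ is globally $1$-Lipschitz; for $\alpha > 1/2$, the map $s \mapsto |s|^{2\alpha}$ is continuously differentiable on all of $\R$ with derivative $2\alpha\,|s|^{2\alpha - 1}\,\mathrm{sgn}(s)$, which is continuous on $\R$ (since $2\alpha - 1 > 0$), hence locally Lipschitz. Continuity of $g$ on compact sub-intervals of $[0, \infty)$ then gives the desired regularity of $F$ in $s$ uniformly in $t$, and Picard--Lindel\"of yields a unique solution to Equation~\eqref{eq:sign_preserve} on its maximal interval of existence, both forward and backward in time from any point.

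For $s_0 = 0$, the constant function $s \equiv 0$ satisfies Equation~\eqref{eq:sign_preserve}, so uniqueness forces this to be \emph{the} solution. For $s_0 > 0$, I would argue by contradiction: assume $s(t^*) = 0$ for some $t^* > 0$ in the interval on which the solution is assumed to stay finite. The IVP with initial condition $s(t^*) = 0$ (at shifted initial time $t^*$) has $s \equiv 0$ as its unique solution, by the previous case applied to the shifted problem. By uniqueness in a two-sided neighborhood of $t^*$, the original solution must vanish identically on a left neighborhood of $t^*$. Setting $Z := \{t \in [0, t^*] : s(t) = 0\}$, this set is closed (by continuity) and nonempty, and the same reasoning applied to any point of $Z$ shows that $Z$ contains a left neighborhood of each of its points; an infimum argument then forces $0 \in Z$, i.e., $s_0 = 0$, contradicting $s_0 > 0$. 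Hence $s$ cannot vanish on the existence interval, and since it starts positive and is continuous, it stays strictly positive throughout. The case $s_0 < 0$ is completely symmetric.

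The main obstacle is really just the local Lipschitz verification at $s = 0$, which is where the hypothesis $\alpha \geq 1/2$ is indispensable rather than merely convenient. For $\alpha < 1/2$, the textbook counterexample $\dot s = (s^2)^\alpha$ with $s(0) = 0$ admits, alongside $s \equiv 0$, the nonzero solution $s(t) = c\, t^{1/(1-2\alpha)}$ for a suitable constant $c > 0$, and sign preservation genuinely fails. Everything else in the argument is a routine application of ODE uniqueness to the explicit form $|s|^{2\alpha} g(t)$, and I do not anticipate further technical difficulties.
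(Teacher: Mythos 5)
Your proof is correct, but it takes a genuinely different route from the paper's. The paper simply writes down the explicit solution of the separable ODE and reads off the sign: for $\alpha = 1/2$, $s(t) = s_0 \exp\bigl(\pm\int_0^t g\bigr)$, and for $\alpha > 1/2$, $s(t) = \pm\bigl((\pm s_0)^{-2\alpha+1} \mp(-2\alpha+1)\int_0^t g\bigr)^{1/(-2\alpha+1)}$ (with $s \equiv 0$ when $s_0 = 0$). Sign preservation is then immediate by inspection, as is the observation that finite-time blow-up can occur only when $\alpha > 1/2$. Your argument instead invokes Picard--Lindel\"of uniqueness after establishing local Lipschitz continuity of $s \mapsto |s|^{2\alpha}$ on $\R$ (which requires exactly $\alpha \geq 1/2$), and then rules out any zero crossing via a backward-in-time uniqueness argument together with a connectedness/infimum argument on the zero set. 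What the paper's approach buys is an entirely self-contained computation with no appeal to ODE theory beyond differentiation; what yours buys is robustness and transparency about where $\alpha \geq 1/2$ enters (it is used precisely to guarantee uniqueness at $s = 0$), plus a clean explanation via the counterexample $\dot s = (s^2)^\alpha$, $s_0 = 0$, $s(t) = ct^{1/(1-2\alpha)}$ of why the hypothesis cannot be dropped. Incidentally, the paper's proof quietly relies on the same uniqueness fact you establish (otherwise the explicit formula need not be \emph{the} solution and a rogue sign-switching solution could exist); your version makes this dependence explicit.
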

\begin{proof}
If $\alpha = 1 / 2$, the solution to Equation~\eqref{eq:sign_preserve} is:
\[
s(t) =
\begin{cases}
~~ s_0 \cdot \exp \big( \int_{t' = 0}^t g(t') dt' \big) & , s_0 > 0 \\[1.5mm]
s_0 \cdot \exp \big( - \int_{t' = 0}^t g(t') dt' \big) & , s_0 < 0 \\[2mm]
~\quad\qquad\qquad 0 & , s_0 = 0
\end{cases}
\text{\,.}
\]
This solution does not diverge in finite time (regardless of the chosen~$g(\cdot)$), and obviously preserves the sign of its initial value.

If~$\alpha > 1/2$, Equation~\eqref{eq:sign_preserve} is solved by:
\[
s(t) =
\begin{cases}
~\quad \big( s_0^{-2\alpha + 1} + (-2\alpha + 1) \int_{t' = 0}^t g(t') dt' \big)^{\frac{1}{-2\alpha + 1}} & , s_0 > 0 \\[1.5mm]
- \big( (- s_0)^{-2\alpha + 1} - (-2\alpha + 1) \int_{t' = 0}^t g(t') dt' \big)^{\frac{1}{-2\alpha + 1}} & , s_0 < 0 \\[2mm]
~\qquad\qquad\qquad\qquad\qquad 0 & , s_0 = 0
\end{cases}
\text{\,.}
\]
In this case, divergence in finite time can take place (depending on the choice of~$g(\cdot)$), but nonetheless the sign of~$s(t)$ is preserved until that happens.
\end{proof}

\section{Deferred proofs} \label{app:proofs}

\subsection{Proof of Theorem~\ref{thm:nuclear_dmf}} \label{app:proofs:nuclear_dmf}

For convenience, throughout the proof we replace the notation~$\bar{W}_\deep$ by~$W^*_\deep$.
We also define a linear operator~$\A$ which specifies all~$m$ measurements:
\[
\A:\R^{d,d}\to\R^m \quad, \quad
\A(W) =
\begin{pmatrix}
\langle A_1, W \rangle\\
\vdots\\
\langle A_m, W \rangle
\end{pmatrix} \,,
\]
and its adjoint operator~$\A^\dagger$:
\[
\A^\dagger:\R^{m}\to\R^{d,d} \quad, \quad
\A^\dagger(\rr) = \sum_{i=1}^m r_i A_i \,.
\]
Then we can rewrite the loss function in Equation~\eqref{eq:psd_recover} as:
\[
\ell(W) = \frac12 \norm{\A(W)-\y}_2^2 \,,
\]
where $\y := (y_1, \ldots, y_m)^\top \in \R^m$.
The gradient of $\ell(\cdot)$ can be expressed as:
\[
\nabla\ell(W) = \A^\dagger(\A(W) - \y).
\]

We consider a fixed $\alpha>0$ for now, and will take the limit $\alpha\to0^+$ later.
Recall that gradient flow is run on the objective $\phi(W_1, \ldots, W_N) = \ell(W_N\cdots W_1)$, with initialization $W_j(0) = \alpha I,~j=1,\ldots,N$.
From Lemma~\ref{lem:end_to_end_dynamics}, we know that the product matrix $W(t)=W_N(t)\cdots W_1(t)$ evolves by:
\be \label{eqn:e2e_dynamics_in_norm_proof}
\begin{aligned}
    \dot{W}(t) &= -\sum\nolimits_{j = 1}^N \left[ W(t) W^\top(t) \right]^\frac{j - 1}{N} \cdot \nabla \ell\big(W(t)\big) \cdot \left[ W^\top(t) W(t) \right]^\frac{N - j}{N} \\
    &= -\sum\nolimits_{j = 1}^N \left[ W(t) W^\top(t) \right]^\frac{j - 1}{N} \cdot \A^\dagger(\rr(t)) \cdot \left[ W^\top(t) W(t) \right]^\frac{N - j}{N} \quad,~ t \in \R_{\geq 0} \,, \\[2mm]
    W(0) &= \alpha^N I \,,
\end{aligned}
\ee
where $\rr(t) := \A(W(t))-\y$ is the vector of residuals at time $t$.
Since $A_1, \ldots, A_m$ are symmetric and commutable, they are simultaneously (orthogonally) diagonalizable, \ie~there exists an orthogonal matrix $O\in\R^{d,d}$ such that $\tilde{A_i} := O A_i O^\top$, $i = 1, \ldots, m$, are all diagonal.
Consider a change of variables $\tilde{W}(t) := OW(t)O^\top$, and denote $\tilde{\A}^\dagger(\rr) := O\A^\dagger(\rr)O^\top = \sum_{i=1}^mr_i\tilde{A_i}$.
Then it follows from Equation~\eqref{eqn:e2e_dynamics_in_norm_proof} that:
\be \label{eqn:e2e_diagonalized_dynamics_in_norm_proof}
\begin{aligned}
    \dot{\tilde W}(t)
    &= -\sum\nolimits_{j = 1}^N \big[ \tilde{W}(t) \tilde{W}^\top(t) \big]^\frac{j - 1}{N} \cdot \tilde{\A}^\dagger(\rr(t)) \cdot \big[ \tilde{W}^\top(t) \tilde{W}(t) \big]^\frac{N - j}{N} \quad,~ t \in \R_{\geq 0} \,, \\[2mm]
    \tilde{W}(0) &= \alpha^N I \,.
\end{aligned}
\ee
Notice that: 
\emph{(i)}~$\tilde{W}(0)$ is diagonal;
and \emph{(ii)}~if~$\tilde{W}(t)$ is diagonal then so is~$\dot{\tilde W}(t)$.
We may therefore set the off-diagonal elements of~$\tilde{W}(t)$ to zero, and solve for the diagonal ones:
\be \label{eqn:e2e_diagonals_dynamics_in_norm_proof}
\begin{aligned}
    \dot{\tilde W}_{kk}(t) = -N \cdot \left(\tilde{W}_{kk}^2(t)\right)^{\frac{N-1}{N}} \cdot \tilde{\A}^\dagger_{kk}(\rr(t)) \,,
    ~~ \tilde{W}_{kk}(0) = \alpha^N \,,
    ~~ t \in \R_{\geq 0} \,,
    ~~ k=1, \ldots, d \,.
\end{aligned}
\ee
By Lemma~\ref{lemma:sign_preserve}, $\tilde{W}_{kk}(t)$~maintains the sign of its initialization, meaning it stays positive.
Moreover, since by assumption $N \geq 3$, the solution to Equation~\eqref{eqn:e2e_diagonals_dynamics_in_norm_proof} is:
\begin{align*}
    \tilde{W}_{kk}(t) &= \alpha^N \left( 1 + (N-2)\alpha^{N-2}\cdot\tilde{\A}^\dagger_{kk}\left(\s(t)\right) \right)^{-\frac{N}{N-2}} \,, 
    \quad t \in \R_{\geq 0} \,,    
    \quad k=1,\ldots,d \,,
\end{align*}
where $\s(t) := \int_{t' = 0}^t (\rr(t')) dt'$.
The matrix~$\tilde{W}(t)$ thus has positive elements on its diagonal (and zeros elsewhere), and takes the following form:
\be
\tilde{W}(t) = \alpha^N \left[ I_d + (N-2)\alpha^{N-2}\cdot\tilde{\A}^\dagger\left(\s(t)\right) \right]^{-\frac{N}{N-2}} 
\text{\,,}
\label{eqn:e2e_diagonalized_solution_in_norm_proof}
\ee
where~$I_d$ is the $d \times d$ identity matrix, and $[\,\cdot\,]^{- N / (N - 2)}$ is a negative power operator defined over positive definite matrices.
We assume $W_{\deep, \infty}(\alpha) := \lim_{t \to \infty} W(t)$ exists, and so we may write:
\[
\tilde{W}_{\deep, \infty}(\alpha) := O \, W_{\deep, \infty}(\alpha) \, O^\top = \lim_{t \to \infty} \tilde{W}(t) = \alpha^N \left[ I_d - \tilde{\A}^\dagger\left(\bnu_\infty(\alpha)\right) \right]^{-\frac{N}{N-2}} 
\text{\,,}
\]
where $\bnu_\infty(\alpha) := - (N-2) \alpha^{N-2} \lim_{t \to \infty} \s(t)$.\note{
Existence of $\lim_{t \to \infty} \tilde{W}(t)$ implies that $\lim_{t \to \infty} \tilde{\A}^\dagger\left(\s(t)\right)$ exists (see Equation~\eqref{eqn:e2e_diagonalized_solution_in_norm_proof}), which in turn, given the linear independence of~$\{ \tilde{A}_i \}_{i=1}^m$, indicates that $\lim_{t \to \infty} \s(t)$ exists as well.
Similarly to~\cite{gunasekar2017implicit}, the remainder of the proof treats the case where $\lim_{t \to \infty} \s(t)$ is finite, thereby avoiding the technical load associated with infinite coordinates.
}
Since $\{\tilde{W}(t)\}_t$ are diagonal, $\tilde{W}_{\deep, \infty}(\alpha)$~is diagonal.
Additionally, positive definiteness of~$\{\tilde{W}(t)\}_t$ implies that $\tilde{W}_{\deep, \infty}(\alpha)$~is positive definite as well (it cannot have zero eigenvalues as it is given by a negative power operator).
This means that:
\be
I_d - \tilde{\A}^\dagger\left(\bnu_\infty(\alpha)\right) = \alpha^{- N} \big[ \tilde{W}_{\deep, \infty}(\alpha) \big]^{-\frac{N - 2}{N}} \succ 0
\implies
\tilde{\A}^\dagger\left(\bnu_\infty(\alpha)\right) \prec I_d
\label{eq:nu_admissible}
\text{\,.}
\ee
Now take the limit $\alpha \to 0^+$.
By assumption $W^*_\deep := \lim_{\alpha \to 0^+} W_{\deep, \infty}(\alpha)$ exists, so we can write:
\[
\tilde{W}^*_\deep := O \, W^*_\deep \, O^\top = \lim_{\alpha \to 0^+} \tilde{W}_{\deep, \infty}(\alpha) = \lim_{\alpha \to 0^+} \alpha^N \left[ I_d - \tilde{\A}^\dagger\left(\bnu_\infty(\alpha)\right) \right]^{-\frac{N}{N-2}}
\text{\,,}
\]
The fact that $\{\tilde{W}_{\deep, \infty}(\alpha)\}_\alpha$ are diagonal and positive definite implies that:
\be
\tilde{W}^*_\deep \succeq 0
\label{eq:W_final_diag_psd}
\text{\,.}
\ee
Moreover, if the $k$'th element on the diagonal of~$\tilde{W}^*_\deep$ is non-zero, it must hold that:
\[
\lim_{\alpha \to 0^+} \alpha^N \left( 1 - \tilde{\A}_{kk}^\dagger\left(\bnu_\infty(\alpha)\right) \right)^{-\frac{N}{N-2}} \neq 0
\implies
\lim_{\alpha \to 0^+} \tilde{\A}_{kk}^\dagger\left(\bnu_\infty(\alpha)\right) = 1
\text{\,,}
\]
from which we conclude:
\be
\inprod{I_d - \tilde{\A}^\dagger\left(\bnu_\infty(\alpha)\right)}{\tilde{W}^*_\deep} = \sum_{k=1}^d \big( 1 - \tilde{\A}_{kk}^\dagger \left(\bnu_\infty(\alpha)\right) \big) \cdot \big( \tilde{W}^*_\deep \big)_{kk} \xrightarrow[\alpha \to 0^+]{} 0
\label{eq:diag_comp_slack}
\text{\,.}
\ee
Returning to the original variables (un-diagonalizing by the orthogonal matrix~$O$), recall that:
\be
W^*_\deep = O^\top \, \tilde{W}^*_\deep \, O
\text{\,,}
\label{eq:W_final_undiag}
\ee
and:
\be
\A^\dagger\left(\bnu_\infty(\alpha)\right) = O^\top \tilde{\A}^\dagger\left(\bnu_\infty(\alpha)\right) O
\text{\,,}
\label{eq:A_adjoint_nu_undiag}
\ee
therefore the following hold:
\vspace{-1mm}
\begin{itemize}
\item $W^*_\deep \succeq 0$ ~~(by Equations~\eqref{eq:W_final_diag_psd} and~\eqref{eq:W_final_undiag});
\item $\A(W^*_\deep) = \y$ ~~(by assumption);
\item $\forall \alpha >0 : \A^\dagger\left(\bnu_\infty(\alpha)\right) \prec I_d$ ~~(by Equations~\eqref{eq:nu_admissible} and~\eqref{eq:A_adjoint_nu_undiag}); and
\item $\lim_{\alpha \to 0^+} \big\langle I_d - \tilde{\A}^\dagger\left(\bnu_\infty(\alpha)\right) , \tilde{W}^*_\deep \big\rangle = 0$ ~~(by Equations~\eqref{eq:diag_comp_slack}, \eqref{eq:W_final_undiag} and~\eqref{eq:A_adjoint_nu_undiag}).
\end{itemize}
\vspace{-1.5mm}
Lemma~\ref{lem:nuclear-opt-condition} below then concludes the proof.
\qed

\medskip

\begin{lemma} \label{lem:nuclear-opt-condition}
Suppose that $W^* \in \S_+^d$ satisfies $\A(W^*) = \y$, and that there exists a sequence of vectors $\bnu_1, \bnu_2, \ldots \in \R^m$ such that $\A^\dagger(\bnu_n) \preceq I$ for all~$n$ and $\lim_{n\to\infty} \langle I - \A^\dagger(\bnu_n), W^* \rangle = 0$.
Then $W^* \in \argmin_{W\in\S_+^d,\,\A(W)=\y}\norm{W}_*$.
\end{lemma}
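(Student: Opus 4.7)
The plan is to use standard convex (SDP) duality applied to the program $\min \{ \norm{W}_* : W \in \S_+^d, \, \A(W) = \y \}$. The starting observation is that for any $W \in \S_+^d$ the nuclear norm collapses to the trace, i.e.\ $\norm{W}_* = \Tr(W) = \inprod{I}{W}$. So it suffices to exhibit $W^*$ as a minimizer of $\inprod{I}{W}$ over $\{ W \in \S_+^d : \A(W) = \y \}$, and for this I would use the sequence $\{\bnu_n\}$ as a sequence of approximate dual certificates.

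First I would establish a uniform lower bound on the primal. Let $W \in \S_+^d$ be any feasible point, i.e.\ $W \succeq 0$ and $\A(W) = \y$. Decompose
\[
\inprod{I}{W} = \inprod{I - \A^\dagger(\bnu_n)}{W} + \inprod{\A^\dagger(\bnu_n)}{W}.
\]
The first term is nonnegative because $I - \A^\dagger(\bnu_n) \succeq 0$ (by the hypothesis $\A^\dagger(\bnu_n) \preceq I$) and $W \succeq 0$, using the fact that the Frobenius inner product of two positive semidefinite matrices is nonnegative. The second term rewrites, via the definition of the adjoint, as $\inprod{\bnu_n}{\A(W)} = \inprod{\bnu_n}{\y}$. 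Hence for every primal-feasible $W$ and every $n$,
\[
\norm{W}_* = \inprod{I}{W} \;\geq\; \inprod{\bnu_n}{\y}.
\]

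Next I would apply the same decomposition to $W^*$ itself. Using $\A(W^*) = \y$ gives
\[
\norm{W^*}_* = \inprod{I - \A^\dagger(\bnu_n)}{W^*} + \inprod{\bnu_n}{\y},
\]
and the hypothesis $\lim_{n\to\infty} \inprod{I - \A^\dagger(\bnu_n)}{W^*} = 0$ yields $\norm{W^*}_* = \lim_{n\to\infty} \inprod{\bnu_n}{\y}$. Combining with the previous lower bound, for every feasible $W$ we get $\norm{W}_* \geq \lim_{n\to\infty} \inprod{\bnu_n}{\y} = \norm{W^*}_*$, which is precisely the claim.

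I do not expect a real obstacle here: the only subtlety is being careful that $\norm{W}_* = \Tr(W)$ holds for $W \in \S_+^d$ (so that the trace inequality actually bounds nuclear norms), and that the limit of $\inprod{\bnu_n}{\y}$ exists, which follows automatically since the other two terms in the identity for $\norm{W^*}_*$ have limits. Everything else is a one-line application of PSD--PSD nonnegativity and the adjoint identity.
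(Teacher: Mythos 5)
Your proof is correct and takes essentially the same route as the paper: both treat $\{\bnu_n\}$ as a sequence of dual-feasible points for the SDP $\min\{\langle I, W\rangle : W \in \S_+^d,\ \A(W)=\y\}$ and show the duality gap at $W^*$ vanishes. The only cosmetic difference is that you prove weak duality inline (via the decomposition $\langle I, W\rangle = \langle I-\A^\dagger(\bnu_n), W\rangle + \langle \bnu_n, \y\rangle$ and PSD--PSD nonnegativity), whereas the paper writes out the dual program and cites weak duality as a known fact.
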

\begin{proof}
Recall that~$\S_+^d$ stands for the set of (symmetric and) positive semidefinite matrices in~$\R^{d, d}$, and $\| \cdot \|_*$ denotes matrix nuclear norm.
The minimization problem being considered can be framed as a semidefinite program:\note{
Note that for $W \in \S_+^d$ we have $\norm{W}_* = \langle I, W \rangle$.
}
\be \label{eqn:nuclear_minimization_sdp}
\begin{aligned}
\text{minimize} \quad & \langle I, W \rangle \\
\text{subject to} \quad & \A(W) = \y \\
& W \in \S_+^d \,.
\end{aligned}
\ee
A corresponding dual program is:
\be \label{eqn:nuclear_minimization_sdp_dual}
\begin{aligned}
\text{maximize} \quad &  \bnu^\top\y \\
\text{subject to} \quad & \A^*(\bnu) \preceq I \\
& \bnu \in \R^m \,.
\end{aligned}
\ee
Let $\opt$ be the optimal value for the primal program (Equation~\eqref{eqn:nuclear_minimization_sdp}):
\[
\opt := \min\nolimits_{W\in\S_+^d,\,\A(W)=\y}\|W\|_*
\text{\,.}
\]
By duality theory, for any $\bnu$ feasible in the dual program (Equation~\eqref{eqn:nuclear_minimization_sdp_dual}), we have $\bnu^\top\y \le \opt$.
Since~$W^*$ is feasible in the primal, and each~$\bnu_n$ is feasible in the dual, it holds that:
\begin{align*}
    0 \le \norm{W^*}_* - \opt
    \le \norm{W^*}_* - \bnu_n^\top\y
    = \langle I, W^* \rangle - \bnu_n^\top\A(W^*)
    = \langle I - \A^\dagger(\bnu_n), W^* \rangle \,.
\end{align*}
Taking the limit $n\to\infty$, the right hand side above becomes~$0$, which implies $\norm{W^*}_* = \opt$.
\end{proof}

\subsection{Proof of Proposition~\ref{prop:schatten_disq}} \label{app:proofs:schatten_disq}

We will choose $A_1, \ldots, A_m$ to be diagonal.
This of course ensures symmetry and commutativity.
Additionally, by the proof of Theorem~\ref{thm:nuclear_dmf} (Appendix~\ref{app:proofs:nuclear_dmf}), it implies that $\bar{W}_\deep$ is diagonal and positive semidefinite.\note{
In the proof of Theorem~\ref{thm:nuclear_dmf} (Appendix~\ref{app:proofs:nuclear_dmf}), diagonality of $A_1, \ldots, A_m$ corresponds to the case where~$O$~---~the diagonalizing matrix~---~is simply the identity, and therefore $\bar{W}_\deep$ is equal to~$\tilde{W}^*_\deep$, implying that the former is indeed diagonal and positive semidefinite.
}
We set $A_1, \ldots, A_m$ and $y_1, \ldots, y_m$ such that the linear equations $\inprod{A_i}{W} = y_i$, $i = 1, \ldots, m$, are the following:
\be
\begin{aligned}
& W_{11} = W_{22} \\[1mm]
& W_{11} = W_{kk} + 1 \quad,~ k = 3, 4, \ldots, d
\text{\,.}
\end{aligned}
\label{eq:schatten_disq_constr}
\ee
Note that the matrices $A_1, \ldots, A_m$ which naturally induce these equations are (diagonal and) linearly independent, as required.
We know that $\bar{W}_\deep$ is diagonal and has minimal nuclear norm among all positive semidefinite matrices that satisfy the equations.
Using the fact that for positive semidefinite matrices nuclear norm is the same as trace, one readily sees that:
\[
\bar{W}_\deep = \diag(1, 1, 0, 0, \ldots, 0)
\text{\,.}
\]

We complete the proof by showing that in any neighborhood of~$\bar{W}_\deep$, there exists a positive semidefinite matrix that meets Equation~\eqref{eq:schatten_disq_constr} and has strictly smaller Schatten-$p$ quasi-norm for any $0 < p < 1$.
Indeed, for $\epsilon \in (0, 1)$ define:
\begin{align*}
\hat{W}_\epsilon := 
\begin{pmatrix}
1 & \epsilon & 0 & \cdots & 0 \\
\epsilon & 1 & 0 & \cdots & 0\\
0 & 0 & 0 & \cdots & 0\\
\vdots & \vdots & \vdots & \ddots & \vdots \\
0 & 0 & 0 & \cdots & 0
\end{pmatrix} 
\in \R^{d,d} 
\text{\,.}
\end{align*}
$\hat{W}_\epsilon$~obviously satisfies Equation~\eqref{eq:schatten_disq_constr}.
Additionally, it is symmetric with eigenvalues:
\[
\lambda_1 = 1 + \epsilon ~~,~~ \lambda_2 = 1 - \epsilon ~~,~~ \lambda_3 = \cdots = \lambda_d = 0
\text{\,,}
\]
and therefore is positive semidefinite.
For any $0 < p < 1$:
\[ 
\| \hat{W}_\epsilon \|_{S_p}^p = (1-\epsilon)^p + (1+\epsilon)^p < 2 \cdot \left(\tfrac{1}{2}(1 + \epsilon) + \tfrac{1}{2}(1 - \epsilon) \right)^p = 2 = \| \bar{W}_\deep \|_{S_p}^p
\text{\,,}
\]
where the inequality follows from $\theta_p: \R_{\geq 0} \to \R_{\geq 0}$, $\theta_p(x) = x^p$, being strictly concave.
Noting that taking $\epsilon \to 0^+$ makes $\hat{W}_\epsilon$ arbitrarily close to~$\bar{W}_\deep$, we conclude the proof.
\qed

\subsection{Proof of Lemma~\ref{lemma:asvd}} \label{app:proofs:asvd}

By Theorem~1 in~\cite{bunse1991numerical}, it suffices to show that the product matrix~$W(t)$ is an analytic function of~$t$.
Analytic functions are closed under summation, multiplication and composition, so the analyticity of~$\ell(\cdot)$ implies that~$\phi(\cdot)$ (Equation~\eqref{eq:phi}) is analytic as well.
It then follows (see Theorem~1.1 in~\cite{ilyashenko2008lectures}) that under gradient flow (Equation~\eqref{eq:gf}), the factors $W_1(t), \ldots, W_N(t)$ are analytic functions of~$t$.
Therefore~$W(t)$ (Equation~\eqref{eq:dmf}) is also analytic in~$t$.
\qed

\subsection{Proof of Theorem~\ref{thm:sing_vals_evolve}} \label{app:proofs:sing_vals_evolve}

Differentiate the analytic singular value decomposition (Equation~\eqref{eq:asvd}) with respect to time:
\[
\dot{W}(t) = \dot{U}(t) S(t) V^\top(t) + U(t) \dot{S}(t) V^\top(t) + U(t) S(t) \dot{V}^\top(t)
\text{\,,}
\]
then multiply from the left by~$U^\top(t)$ and from the right by~$V(t)$:
\[
U^\top(t) \dot{W}(t) V(t) = U^\top(t) \dot{U}(t) S(t) + \dot{S}(t) + S(t) \dot{V}^\top(t) V(t)
\text{\,,}
\]
where we used the fact that $U(t)$ and~$V(t)$ have orthonormal columns.
Restricting our attention to the diagonal elements of this matrix equation, we have:
\[
\uu_r^\top(t) \dot{W}(t) \vv_r(t) = \inprod{\uu_r(t)}{\dot{\uu}_r(t)} \sigma_r(t) + \dot{\sigma}_r(t) + \sigma_r(t) \inprod{\dot{\vv}_r(t)}{\vv_r(t)}
\quad,~ r = 1, \ldots, \min\{d, d'\}
\text{\,.}
\]
Since $\uu_r(t)$ has constant (unit) length it holds that $\inprod{\uu_r(t)}{\dot{\uu}_r(t)} = \frac{1}{2} \cdot \frac{d}{dt} \norm{\uu_r(t)}_2^2 = 0$, and similarly $\inprod{\dot{\vv}_r(t)}{\vv_r(t)} = 0$.
The latter equation thus simplifies to:
\be
\dot{\sigma}_r(t) = \uu_r^\top(t) \dot{W}(t) \vv_r(t)
\quad,~ r = 1, \ldots, \min\{d, d'\}
\text{\,.}
\label{eq:S_evolve_impl}
\ee
Lemma~\ref{lem:end_to_end_dynamics} from Appendix~\ref{app:lemmas} provides the following expression for~$\dot{W}(t)$:
\[
\dot{W}(t) = -\sum\nolimits_{j = 1}^N \left[ W(t) W^\top(t) \right]^\frac{j - 1}{N} \cdot
\nabla \ell\big(W(t)\big) \cdot \left[ W^\top(t) W(t) \right]^\frac{N - j}{N}
\text{\,,}
\]
where~$[\,\cdot\,]^\alpha$, $\alpha \in \R_{\geq 0}$, stands for a power operator defined over positive semidefinite matrices (with $\alpha = 0$ yielding identity by definition).
Plugging in the analytic singular value decomposition (Equation~\eqref{eq:asvd}) gives:
\beas
\dot{W}(t) 
&=& - \nabla \ell\big(W(t)\big) \cdot V(t) \big( S^2(t) \big)^\frac{N - 1}{N} V^\top(t) 
\\[0.5mm]
&& - \sum\nolimits_{j = 2}^{N - 1} U(t) \big( S^2(t) \big)^\frac{j - 1}{N} U^\top(t) \cdot \nabla \ell\big(W(t)\big) \cdot V(t) \big( S^2(t) \big)^\frac{N - j}{N} V^\top(t) 
\\
&& - U(t) \big( S^2(t) \big)^\frac{N - 1}{N} U^\top(t) \cdot \nabla \ell\big(W(t)\big)
\text{\,.}
\eeas
Left-multiplying by~$\uu_r^\top(t)$, right-multiplying by~$\vv_r(t)$, and using the fact that $\{\uu_r(t)\}_r$ (columns of~$U(t)$) and~$\{\vv_r(t)\}_r$ (columns of~$V(t)$) are orthonormal sets, we obtain:
\beas
\uu_r^\top(t) \dot{W}(t) \vv_r(t)
&=& - \uu_r^\top(t) \nabla \ell\big(W(t)\big) \vv_r(t) \cdot ( \sigma_r^2(t) )^\frac{N - 1}{N}
\\[0.5mm]
&& - \sum\nolimits_{j = 2}^{N - 1} ( \sigma_r^2(t) )^\frac{j - 1}{N} \cdot \uu_r^\top(t) \nabla \ell\big(W(t)\big) \vv_r(t) \cdot ( \sigma_r^2(t) )^\frac{N - j}{N}
\\
&& - ( \sigma_r^2(t) )^\frac{N - 1}{N} \cdot \uu_r^\top(t) \nabla \ell\big(W(t)\big) \vv_r(t)
\\[1mm]
&=& - N \cdot ( \sigma_r^2(t) )^\frac{N - 1}{N} \cdot \uu_r^\top(t) \nabla \ell\big(W(t)\big) \vv_r(t)
\text{\,.}
\eeas
Combining this with Equation~\eqref{eq:S_evolve_impl} yields the sought-after Equation~\eqref{eq:S_evolve}.

\medskip

To complete the proof, it remains to show that if the matrix factorization is non-degenerate (has depth $N \geq 2$), singular values need not be signed, \ie~we may assume $\sigma_r(t) \geq 0$ for all~$t$.
Equation~\eqref{eq:S_evolve}, along with Lemma~\ref{lemma:sign_preserve}, imply that if $N \geq 2$, $\sigma_r(t)$~will never switch sign.
Therefore, either $\sigma_r(t) \geq 0$ for all~$t$, or alternatively, this will hold if we take away a minus sign from~$\sigma_r(t)$ and absorb it into~$\uu_r(t)$ (or~$\vv_r(t)$).
\qed

\subsection{Proof of Lemma~\ref{lemma:sing_vecs_evolve}} \label{app:proofs:sing_vecs_evolve}

A real analytic function is either identically zero, or admits a zero set with no accumulation points (\cf~\cite{krantz2002primer}).
For any $r \in \{ 1, \ldots, \min\{d, d'\} \}$, applying this fact to the signed singular value~$\sigma_r(t)$, while taking into account our assumption of it being different from zero at initialization, we conclude that the set of times~$t$ for which it vanishes has no accumulation points.
Similarly, for any $r, r' \in \{ 1, \ldots, \min\{d, d'\} \}$, $r \neq r'$, we assumed that $\sigma_r^2(t) - \sigma_{r'}^2(t)$ is different from zero at initialization, and thus the set of times~$t$ for which it vanishes is free from accumulation points.
Overall, any time~$t$ for which $\sigma_r(t) = 0$ for some~$r$, or $\sigma_r^2(t) = \sigma_{r'}^2(t)$ for some~$r \neq r'$, must be isolated, \ie~surrounded by a neighborhood in which none of these conditions are met.
Accordingly, hereafter, we assume $\forall r : \sigma_r(t) \neq 0$ and $\forall r \neq r' : \sigma_r^2(t) \neq \sigma_{r'}^2(t)$, knowing that for times~$t$ in which this does not hold, $\dot{U}(t)$ and~$\dot{V}(t)$ can be inferred by continuity.

\medskip

We now follow a series of steps adopted from~\cite{townsend2016differentiating}, to derive expressions for $\dot{U}(t)$ and~$\dot{V}(t)$ in terms of $U(t)$, $V(t)$, $S(t)$ and~$\dot{W}(t)$.
Differentiate the analytic singular value decomposition (Equation~\eqref{eq:asvd}) with respect to time:
\be
\dot{W}(t) = \dot{U}(t) S(t) V^\top(t) + U(t) \dot{S}(t) V^\top(t) + U(t) S(t) \dot{V}^\top(t)
\text{\,.}
\label{eq:asvd_deriv}
\ee
Multiplying from the left by~$U^\top(t)$ and from the right by~$V(t)$, we have:
\be
U^\top(t) \dot{W}(t) V(t) = U^\top(t) \dot{U}(t) S(t) + \dot{S}(t) + S(t) \dot{V}^\top(t) V(t)
\text{\,,}
\label{eq:asvd_deriv_mult}
\ee
where we used the fact that~$U(t)$ and~$V(t)$ have orthonormal columns.
This orthonormality also implies that $U^\top(t) \dot{U}(t)$ and~$\dot{V}^\top(t) V(t)$ are skew-symmetric,\note{
To see this, note that $U^\top(t) U(t)$ is constant, thus its derivative with respect to time is equal to zero, \ie~$\dot{U}^\top(t) U(t) + U^\top(t) \dot{U}(t) = 0$ (by an analogous argument $\dot{V}^\top(t) V(t) + V^\top(t) \dot{V}(t) = 0$ holds as well).
}
and in particular have zero diagonals.
Since~$S(t)$ is diagonal, $U^\top(t) \dot{U}(t) S(t)$ and $S(t) \dot{V}^\top(t) V(t)$ have zero diagonals as well.
On the other hand $\dot{S}(t)$ holds zeros outside its diagonal, and so we may write:
\be
\bar{I}_{\min\{d, d'\}} \odot (U^\top(t) \dot{W}(t) V(t)) = U^\top(t) \dot{U}(t) S(t) + S(t) \dot{V}^\top(t) V(t)
\text{\,,}
\label{eq:asvd_deriv_mult_offdiag}
\ee
where $\odot$~stands for Hadamard (element-wise) product, and $\bar{I}_{\min\{d, d'\}}$ is a $\min\{d, d'\} \times \min\{d, d'\}$ matrix holding zeros on its diagonal and ones elsewhere.
Taking transpose of Equation~\eqref{eq:asvd_deriv_mult_offdiag}, while recalling that $U^\top(t) \dot{U}(t)$ and~$\dot{V}^\top(t) V(t)$ are skew-symmetric, we have:
\be
\bar{I}_{\min\{d, d'\}} \odot (V^\top(t) \dot{W}^\top(t) U(t)) = - S(t) U^\top(t) \dot{U}(t) - \dot{V}^\top(t) V(t) S(t)
\text{\,.}
\label{eq:asvd_deriv_mult_offdiag_tr}
\ee
Right-multiply Equation~\eqref{eq:asvd_deriv_mult_offdiag} by~$S(t)$, left-multiply Equation~\eqref{eq:asvd_deriv_mult_offdiag_tr} by~$S(t)$, and add:
\[
\bar{I}_{\min\{d, d'\}} \odot (U^\top(t) \dot{W}(t) V(t) S(t) + S(t) V^\top(t) \dot{W}^\top(t) U(t)) = U^\top(t) \dot{U}(t) S^2(t) - S^2(t) U^\top(t) \dot{U}(t)
\text{\,.}
\]
Since we assume diagonal elements of~$S^2(t)$ are distinct ($\sigma_r^2(t) \neq \sigma_{r'}^2(t)$ for $r \neq r'$), this implies:
\[
U^\top(t) \dot{U}(t) = H(t) \odot \big[ U^\top(t) \dot{W}(t) V(t) S(t) + S(t) V^\top(t) \dot{W}^\top(t) U(t) \big]
\text{\,,}
\]
where the matrix $H(t) \in \R^{\min\{d, d'\}, \min\{d, d'\}}$ is defined by:
\be
H_{r, r'}(t) :=
\begin{cases}
\big( \sigma_{r'}^2(t) - \sigma_r^2(t) \big)^{-1} & , r \neq r' \\
~\quad\qquad 0 & , r = r'
\end{cases}
\text{\,.}
\label{eq:H}
\ee
Multiplying from the left by~$U(t)$ yields:
\be
P_{U(t)} \dot{U}(t) = U(t) \big( H(t) \odot \big[ U^\top(t) \dot{W}(t) V(t) S(t) + S(t) V^\top(t) \dot{W}^\top(t) U(t) \big] \big)
\text{\,,}
\label{eq:U_evolve_impl_proj}
\ee
with $P_{U(t)} := U(t) U^\top(t)$ being the projection onto the subspace spanned by the (orthonormal) columns of~$U(t)$.
Denote by~$P_{U_\perp(t)}$ the projection onto the orthogonal complement, \ie~$P_{U_\perp(t)} := I_d - U(t) U^\top(t)$, where $I_d$ is the $d \times d$ identity matrix.
Apply~$P_{U_\perp(t)}$ to both sides of Equation~\eqref{eq:asvd_deriv}:
\[
P_{U_\perp(t)} \dot{W}(t) = P_{U_\perp(t)} \dot{U}(t) S(t) V^\top(t) + P_{U_\perp(t)} U(t) \dot{S}(t) V^\top(t) + P_{U_\perp(t)} U(t) S(t) \dot{V}^\top(t)
\text{\,.}
\]
Note that $P_{U_\perp(t)} U(t) = 0$, and multiply from the right by $V(t) S^{-1}(t)$ (the latter is well-defined since we assume diagonal elements of~$S(t)$ are non-zero~---~$\sigma_r(t) \neq 0$):
\be
P_{U_\perp(t)} \dot{U}(t) = P_{U_\perp(t)} \dot{W}(t) V(t) S^{-1}(t) = \big( I_d - U(t) U^\top(t) \big) \dot{W}(t) V(t) S^{-1}(t)
\text{\,.}
\label{eq:U_evolve_impl_perp_proj}
\ee
Adding Equations~\eqref{eq:U_evolve_impl_proj} and~\eqref{eq:U_evolve_impl_perp_proj}, we obtain an expression for~$\dot{U}(t)$:
\bea
\dot{U}(t) &=& P_{U(t)} \dot{U}(t) + P_{U_\perp(t)} \dot{U}(t) 
\nonumber\\[0.5mm]
&=& U(t) \big( H(t) \odot \big[ U^\top(t) \dot{W}(t) V(t) S(t) + S(t) V^\top(t) \dot{W}^\top(t) U(t) \big] \big) 
\nonumber\\[0.5mm]
&&\quad + \big( I_d - U(t) U^\top(t) \big) \dot{W}(t) V(t) S^{-1}(t)
\text{\,.}
\label{eq:U_evolve_impl}
\eea
By returning to Equations~\eqref{eq:asvd_deriv_mult_offdiag} and~\eqref{eq:asvd_deriv_mult_offdiag_tr}, switching the directions from which they were multiplied by~$S(t)$ (\ie~multiplying Equation~\eqref{eq:asvd_deriv_mult_offdiag} from the left and Equation~\eqref{eq:asvd_deriv_mult_offdiag_tr} from the right), and continuing similarly to above, an analogous expression for~$\dot{V}(t)$ is derived: 
\bea
\dot{V}(t) &=& V(t) \big( H(t) \odot \big[S(t) U^\top(t) \dot{W}(t) V(t) + V^\top(t) \dot{W}^\top(t) U(t) S(t) \big] \big) 
\nonumber\\[0.5mm]
&&\quad + \big( I_{d'} - V(t) V^\top(t) \big) \dot{W}^\top(t) U(t) S^{-1}(t)
\text{\,,}
\label{eq:V_evolve_impl}
\eea
where $I_{d'}$ is the $d' \times d'$ identity matrix.

\medskip

Next, we invoke Lemma~\ref{lem:end_to_end_dynamics} from Appendix~\ref{app:lemmas}, which provides an expression for~$\dot{W}(t)$:
\be
\dot{W}(t) = -\sum\nolimits_{j = 1}^N \left[ W(t) W^\top(t) \right]^\frac{j - 1}{N} \cdot
\nabla \ell\big(W(t)\big) \cdot \left[ W^\top(t) W(t) \right]^\frac{N - j}{N}
\text{\,,}
\label{eq:W_evolve}
\ee
where~$[\,\cdot\,]^\alpha$, $\alpha \in \R_{\geq 0}$, stands for a power operator defined over positive semidefinite matrices (with $\alpha = 0$ yielding identity by definition).
Plug the analytic singular value decomposition (Equation~\eqref{eq:asvd}) into Equation~\eqref{eq:W_evolve}:
\bea
\dot{W}(t) 
&=& - \nabla \ell\big(W(t)\big) \cdot V(t) \big( S^2(t) \big)^\frac{N - 1}{N} V^\top(t) 
\nonumber\\[0.5mm]
&& - \sum\nolimits_{j = 2}^{N - 1} U(t) \big( S^2(t) \big)^\frac{j - 1}{N} U^\top(t) \cdot \nabla \ell\big(W(t)\big) \cdot V(t) \big( S^2(t) \big)^\frac{N - j}{N} V^\top(t) 
\nonumber\\
&& - U(t) \big( S^2(t) \big)^\frac{N - 1}{N} U^\top(t) \cdot \nabla \ell\big(W(t)\big)
\text{\,.}
\label{eq:W_evolve_asvd}
\eea
From this it follows that:
\bea
U^\top(t) \dot{W}(t) V(t) 
&=& - U^\top(t) \nabla \ell\big(W(t)\big) V(t) \big( S^2(t) \big)^\frac{N - 1}{N} 
\nonumber\\[0.5mm]
&& - \sum\nolimits_{j = 2}^{N - 1} \big( S^2(t) \big)^\frac{j - 1}{N} U^\top(t) \nabla \ell\big(W(t)\big) V(t) \big( S^2(t) \big)^\frac{N - j}{N} 
\nonumber\\
&& - \big( S^2(t) \big)^\frac{N - 1}{N} U^\top(t) \nabla \ell\big(W(t)\big) V(t) 
\nonumber\\[1mm]
&=& - G(t) \odot \left[ U^\top(t) \nabla \ell\big(W(t)\big) V(t) \right]
\text{\,,}
\label{eq:W_evolve_asvd_mult}
\eea
where $G(t) \in \R^{\min\{d, d'\}, \min\{d, d'\}}$ is defined by: 
\be
G_{r, r'}(t) := \sum\nolimits_{j = 1}^N (\sigma_r^2(t))^\frac{j - 1}{N} (\sigma_{r'}^2(t))^\frac{N - j}{N}
\text{\,.}
\label{eq:G}
\ee
Since $G(t)$ is symmetric (and $S(t)$ is diagonal), Equation~\eqref{eq:W_evolve_asvd_mult} implies:
\beas
&& U^\top(t) \dot{W}(t) V(t) S(t) + S(t) V^\top(t) \dot{W}^\top(t) U(t) \\
&&\qquad = - G(t) \odot \big[ U^\top(t) \nabla \ell\big(W(t)\big) V(t) S(t) + S(t) V^\top(t) \nabla \ell^\top\big(W(t)\big) U(t) \big]
\text{\,.}
\eeas
Taking Hadamard product by~$H(t)$ (Equation~\eqref{eq:H}) we obtain:
\beas
&& H(t) \odot \big[ U^\top(t) \dot{W}(t) V(t) S(t) + S(t) V^\top(t) \dot{W}^\top(t) U(t) \big] \\
&&\qquad = - F(t) \odot \big[ U^\top(t) \nabla \ell\big(W(t)\big) V(t) S(t) + S(t) V^\top(t) \nabla \ell^\top\big(W(t)\big) U(t) \big]
\text{\,,}
\eeas
where $F(t) := H(t) \odot G(t)$ is given by:
\be
F_{r, r'}(t) :=
\begin{cases}
\big( (\sigma_{r'}^2(t))^{1 / N} - (\sigma_r^2(t))^{1 / N} \big)^{-1} & , r \neq r' \\
\,~~\quad\qquad\qquad 0 & , r = r'
\end{cases}
\text{\,.}
\label{eq:F}
\ee
Plug this into Equation~\eqref{eq:U_evolve_impl}:
\bea
\dot{U}(t) 
&=& - U(t) \big( F(t) \odot \big[ U^\top(t) \nabla \ell\big(W(t)\big) V(t) S(t) + S(t) V^\top(t) \nabla \ell^\top\big(W(t)\big) U(t) \big] \big) 
\nonumber\\[0.5mm]
&&\quad + \big( I_d - U(t) U^\top(t) \big) \dot{W}(t) V(t) S^{-1}(t)
\text{\,.}
\label{eq:U_evolve_half_impl}
\eea
The first term on the right-hand side here complies with the result we seek to prove (Equation~\eqref{eq:U_evolve}).
To treat the second term, we again invoke Equation~\eqref{eq:W_evolve_asvd}, noting that the matrix $P_{U_\perp(t)} := I_d - U(t) U^\top(t)$ (projection onto the orthogonal complement of the subspace spanned by the columns of~$U(t)$) produces zero when right-multiplied by~$U(t)$.
This implies:
\[
\big( I_d - U(t) U^\top(t) \big) \dot{W}(t) V(t) S^{-1}(t) = - \big( I_d - U(t) U^\top(t) \big) \nabla \ell\big(W(t)\big) V(t) \big( S^2(t) \big)^{\frac{1}{2} - \frac{1}{N}}
\text{\,.}
\]
Plugging this back into Equation~\eqref{eq:U_evolve_half_impl} yields Equation~\eqref{eq:U_evolve}~---~sought-after result.
The analogous Equation~\eqref{eq:V_evolve} can be derived in a similar fashion (by incorporating Equation~\eqref{eq:W_evolve} into Equation~\eqref{eq:V_evolve_impl}, as we have done for Equation~\eqref{eq:U_evolve_impl}).
\qed

\subsection{Proof of Corollary~\ref{cor:sing_vecs_station}} \label{app:proofs:sing_vecs_station}

As stated in the proof of Lemma~\ref{lemma:sing_vecs_evolve} (Appendix~\ref{app:proofs:sing_vecs_evolve}), for all~$t$ but a set of isolated points it holds that $\forall r : \sigma_r(t) \neq 0$ and $\forall r \neq r' : \sigma_r^2(t) \neq \sigma_{r'}^2(t)$, meaning Equations~\eqref{eq:U_evolve} and~\eqref{eq:V_evolve} are well-defined.
We will initially assume this to be the case, and then treat isolated points by taking limits.
Left-multiply Equation~\eqref{eq:U_evolve} by~$U^\top(t)$ and Equation~\eqref{eq:V_evolve} by~$V^\top(t)$:
\beas
U^\top(t) \dot{U}(t) &=& - F(t) \odot \left[ U^\top(t) \nabla\ell(W(t)) V(t) S(t) + S(t) V^\top(t) \nabla\ell^\top(W(t)) U(t) \right] \\[0.5mm]
V^\top(t) \dot{V}(t) &=& - F(t) \odot \left[ S(t) U^\top(t) \nabla\ell(W(t)) V(t) + V^\top(t) \nabla\ell^\top(W(t)) U(t) S(t) \right]
\text{\,,}
\eeas
where we have used the fact that~$U(t)$ and~$V(t)$ have orthonormal columns.
Right-multiplying the first equation by~$S(t)$, left-multiplying the second by~$S(t)$, and then subtracting, we obtain:
\beas
&& U^\top(t) \dot{U}(t) S(t) - S(t) V^\top(t) \dot{V}(t) \\[0.5mm]
&&\qquad = - F(t) \odot \left[ U^\top(t) \nabla\ell(W(t)) V(t) S^2(t) - S^2(t) U^\top(t) \nabla\ell(W(t)) V(t) \right] \\[0.5mm]
&&\qquad = - F(t) \odot E(t) \odot \left[ U^\top(t) \nabla\ell(W(t)) V(t) \right]
\text{\,,}
\eeas
where the matrix $E(t) \in \R^{\min\{d, d'\}, \min\{d, d'\}}$ is defined by: $E_{r, r'}(t) := \sigma_{r'}^2(t) - \sigma_r^2(t)$.
Recalling the definition of~$F(t)$ (Equation~\eqref{eq:F}), we have:
\be
U^\top(t) \dot{U}(t) S(t) - S(t) V^\top(t) \dot{V}(t) = - \bar{I}_{\min\{d, d'\}} \odot G(t) \odot \left[ U^\top(t) \nabla\ell(W(t)) V(t) \right]
\text{\,,}
\label{eq:station_diag}
\ee
where $G(t) \in \R^{\min\{d, d'\}, \min\{d, d'\}}$ is the matrix defined in Equation~\eqref{eq:G}, and $\bar{I}_{\min\{d, d'\}}$ is a matrix of the same size, with zeros on its diagonal and ones elsewhere.
Since by assumption $\forall r : \sigma_r(t) \neq 0$, the matrix~$G(t)$ does not contain zero elements.
Therefore when $\dot{U}(t) = 0$ and~$\dot{V}(t) = 0$, leading the left-hand side of Equation~\eqref{eq:station_diag} to vanish, it must be that $U^\top(t) \nabla\ell(W(t)) V(t)$ is diagonal.

\medskip

To complete the proof, it remains to treat those isolated times~$t$ for which the conditions $\forall r : \sigma_r(t) \neq 0$ and $\forall r \neq r' : \sigma_r^2(t) \neq \sigma_{r'}^2(t)$ do not all hold, and thus our derivation of Equation~\eqref{eq:station_diag} may be invalid.
Since both sides of the equation are continuous, it carries over to such isolated times, and is in fact applicable to any~$t$.
Accordingly, any~$t$ for which $\dot{U}(t) = 0$ and~$\dot{V}(t) = 0$ admits $\bar{I}_{\min\{d, d'\}} \odot G(t) \odot \left[ U^\top(t) \nabla\ell(W(t)) V(t) \right] = 0$.
Recalling the definition of~$G(t)$ (Equation~\eqref{eq:G}), it is clear that the latter equality implies diagonality of $U^\top(t) \nabla\ell(W(t)) V(t)$ if $\forall r : \sigma_r(t) \neq 0$.
This means that the sought-after result holds if $\forall r : \sigma_r(t) \neq 0$ for every~$t$.
Recollecting that the product matrix is initialized to be full-rank ($\forall r : \sigma_r(0) \neq 0$), and invoking our assumption on the factorization being non-degenerate ($N \geq 2$), we apply Lemma~\ref{lemma:sign_preserve} (from Appendix~\ref{app:proofs:sing_vals_evolve}) to the evolution of~$\{\sigma_r(t)\}_r$ (Equation~\eqref{eq:S_evolve} in Theorem~\ref{thm:sing_vals_evolve}) and conclude the proof.
\qed

\section{Extension of~\cite{gunasekar2017implicit} to asymmetric matrix factorization} \label{app:gunasekar_asym}

Extending Theorem~\ref{thm:nuclear_mf} from~\cite{gunasekar2017implicit} to asymmetric (depth-$2$) matrix factorizations boils down to proving the following proposition:
\begin{proposition} \label{prop:nuclear_mf_asym}
Consider gradient flow on the objective:
\[
\phi(W_1, W_2) = \ell(W_2 W_1) = \frac{1}{2} \sum\nolimits_{i = 1}^m (y_i - \inprod{A_i}{W_2 W_1})^2
\text{\,,}
\]
with $W_1, W_2 \in \R^{d, d}$ initialized to~$\alpha I$, $\alpha > 0$, and denote by~$W_{\shallow, \infty}(\alpha)$ the product matrix obtained at the end of optimization (\ie~$W_{\shallow, \infty}(\alpha) := \lim_{t \to \infty} W_2(t) W_1(t)$ where $W_j(0) = \alpha I$ and $\dot{W_j}(t) = -\frac{\partial \phi}{\partial W_j}(W_1(t), W_2(t))$ for~$t \in \R_{\geq 0}$).
Assume the measurement matrices $A_1, \ldots, A_m$ commute.
Then, if $\bar{W}_\shallow := \lim_{\alpha \to 0} W_{\shallow, \infty}(\alpha)$ exists and is a global optimum for Equation~\eqref{eq:psd_recover} with~$\ell(\bar{W}_\shallow) = 0$, it holds that $\bar{W}_\shallow \in \argmin_{W \in \S_+^d,\,\ell(W) = 0} \norm{W}_*$, \ie~$\bar{W}_\shallow$ is a global optimum with minimal nuclear~norm.
\end{proposition}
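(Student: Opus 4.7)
My plan is to reduce the asymmetric depth-$2$ setting of Proposition~\ref{prop:nuclear_mf_asym} to the symmetric $ZZ^\top$ setting that Theorem~\ref{thm:nuclear_mf} already handles. The identical initialization $W_1(0) = W_2(0) = \alpha I$ is stronger than the mere balancedness $W_2^\top W_2 = W_1 W_1^\top$ required by Lemma~\ref{lem:end_to_end_dynamics}, and combined with symmetry of $\nabla\ell$ it will force the exact invariance $W_1(t) = W_2^\top(t)$ throughout gradient flow.

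To establish this invariance, I differentiate the gradient flow equations to get $\dot{W}_1(t) = -W_2^\top(t)\, \nabla\ell(W(t))$ and $\dot{W}_2^\top(t) = -W_1(t)\, \nabla\ell(W(t))^\top$. Since $A_1, \ldots, A_m$ are symmetric, $\nabla\ell(W) = \sum_{i=1}^m (\inprod{A_i}{W} - y_i) A_i$ is symmetric as well. Hence $\Delta(t) := W_1(t) - W_2^\top(t)$ obeys the linear matrix ODE $\dot\Delta(t) = \Delta(t)\, \nabla\ell(W(t))$, and because $\Delta(0) = 0$, uniqueness yields $\Delta(t) \equiv 0$. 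In particular $W(t) = W_2(t)\, W_2^\top(t) \succeq 0$ and $W_2$ evolves by $\dot{W}_2(t) = -\nabla\ell(W_2(t) W_2^\top(t))\, W_2(t)$.

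Next I match this dynamics to the symmetric flow of Theorem~\ref{thm:nuclear_mf}, which is $\dot{Z}(s) = -\nabla\psi(Z(s)) = -2\, \nabla\ell(Z(s) Z(s)^\top)\, Z(s)$ (the factor~$2$ arising from symmetry of $\nabla\ell$ when one differentiates $\ell(ZZ^\top)$). Defining $Z(s) := W_2(2s)$ with $Z(0) = \alpha I$, the chain rule gives $\dot{Z}(s) = 2\, \dot{W}_2(2s) = -2\, \nabla\ell(Z(s) Z(s)^\top)\, Z(s)$. Thus the $Z$- and $W_2$-trajectories sweep out identical product-matrix paths up to time reparametrization, and in particular $\lim_{s \to \infty} Z(s) Z(s)^\top = \lim_{t \to \infty} W_2(t) W_1(t) = W_{\shallow, \infty}(\alpha)$. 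The object $W_{\shallow, \infty}(\alpha)$ of Proposition~\ref{prop:nuclear_mf_asym} therefore coincides with the symmetric-case object of Theorem~\ref{thm:nuclear_mf}; the hypotheses on $\bar{W}_\shallow$ and the commutativity assumption transfer verbatim, and the theorem yields $\bar{W}_\shallow \in \argmin_{W \in \S_+^d,\, \ell(W) = 0} \|W\|_*$.

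The only delicate step is Step~1: the invariance $W_1(t) = W_2^\top(t)$ is not a consequence of balancedness alone, but crucially uses both the identical-matrix initialization and the symmetry of $\nabla\ell$ induced by symmetric sensing matrices~---~exactly the data granted by the proposition's setup. Once this is in hand, the rest is a mechanical time-rescaling reduction to a theorem already proven in~\cite{gunasekar2017implicit}, so no new analysis of the underdetermined limit $\alpha \to 0^+$ is needed.
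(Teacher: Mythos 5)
Your proof is correct, but it follows a genuinely different route than the paper's. The paper's proof invokes the end-to-end dynamics lemma (Lemma~\ref{lem:end_to_end_dynamics}) specialized to $N=2$, which yields
\[
\dot{W}(t) = -\A^\dagger(\rr(t))\big[W^\top(t) W(t)\big]^{1/2} - \big[W(t) W^\top(t)\big]^{1/2}\A^\dagger(\rr(t))\,,\qquad W(0)=\alpha^2 I\,,
\]
then diagonalizes by the common orthogonal basis of $\{A_i\}$, uses Lemma~\ref{lemma:sign_preserve} to show the diagonal entries stay positive (hence $W(t)\succ 0$ and $[W W^\top]^{1/2}=W$), and finally observes that the resulting ODE $\dot{W} = -\A^\dagger(\rr)W - W\A^\dagger(\rr)$ is identical to the one analyzed in \cite{gunasekar2017implicit}. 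You instead establish the first-order invariant $W_1(t)=W_2^\top(t)$ directly from the factor-level ODEs, using the identical initialization $W_1(0)=W_2(0)=\alpha I$ together with symmetry of $\nabla\ell$ to turn $\Delta := W_1 - W_2^\top$ into a linear ODE with zero initial data, and then reduce to the symmetric flow $\dot{Z}=-\nabla\psi(Z)$ by a factor-of-two time reparametrization. Both are valid. Your route is more elementary~---~it avoids Lemma~\ref{lem:end_to_end_dynamics} and the diagonalization step entirely, and makes the asymmetric-to-symmetric reduction transparent at the level of the factors rather than the product matrix. The paper's route has the advantage that it proceeds through the same machinery as Theorem~\ref{thm:nuclear_dmf}, and moreover directly exhibits positive definiteness of $W(t)$ as a byproduct (which the paper uses to justify that the $\S_+^d$ constraint in the claim is not a real restriction). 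One small point worth making explicit in your write-up: when you invoke "uniqueness" for $\dot{\Delta}=\Delta\,\nabla\ell(W(t))$, you should say that you are viewing $\nabla\ell(W(t))$ as a fixed continuous coefficient along the already-existing unique trajectory of the polynomial (hence locally Lipschitz) gradient flow, so that $\Delta\equiv 0$ follows from the linear Gr\"onwall bound; this is how the argument avoids circularity.
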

\begin{proof}
We follow the proof of Theorem~\ref{thm:nuclear_dmf} (Appendix~\ref{app:proofs:nuclear_dmf}) up until Equation~\eqref{eqn:e2e_diagonals_dynamics_in_norm_proof}.
Equation~\eqref{eqn:e2e_dynamics_in_norm_proof}, specialized to $N = 2$, yields dynamics for the product matrix $W(t) = W_2(t) W_1 (t)$:
\be \label{eqn:e2e_dynamics_in_two_layer_norm_proof}
\begin{aligned}
    \dot{W}(t) 
    &= - \A^*(\rr(t)) \cdot \left[ W^\top(t) W(t) \right]^\frac{1}{2} - \left[ W(t) W^\top(t) \right]^\frac{1}{2} \cdot \A^*(\rr(t)) \,, \\[1mm]
    W(0) &= \alpha^2 I \,.
\end{aligned}
\ee
Equation~\eqref{eqn:e2e_diagonals_dynamics_in_norm_proof}, along with Lemma~\ref{lemma:sign_preserve}, imply that $\tilde{W}_{kk}(t)$~maintains the sign of its initialization, \ie~is positive.
The diagonal matrix~$\tilde{W}(t)$ is therefore positive definite, and so is the product matrix $W(t) = O^\top \tilde{W}(t) O$.
Equation~\eqref{eqn:e2e_dynamics_in_two_layer_norm_proof} thus becomes:
\be \label{eqn:e2e_dynamics_in_two_layer_norm_proof_2}
\begin{aligned}
    \dot{W}(t) 
    &= - \A^*(\rr(t)) \cdot W(t) -  W(t)\cdot \A^*(\rr(t)) \,, \\[1mm]
    W(0) &= \alpha^2 I \,.
\end{aligned}
\ee
The dynamics in Equation~\eqref{eqn:e2e_dynamics_in_two_layer_norm_proof_2} are precisely those developed in~\cite{gunasekar2017implicit} for a symmetric matrix factorization.
The proof of Theorem~1 there can now be applied as is, establishing the desired result.
\end{proof}

\section{Further experiments and implementation details} \label{app:exper}

\subsection{Further experiments} \label{app:exper:further}

Figures~\ref{fig:exper_intro_supp}, \ref{fig:exper_norm_supp} and~\ref{fig:exper_dyn_supp} present matrix sensing experiments supplementing the matrix completion experiments reported in Figures~\ref{fig:exper_intro}, \ref{fig:exper_norm} and~\ref{fig:exper_dyn} respectively.

\begin{figure}
\begin{center}
\includegraphics[width=\textwidth]{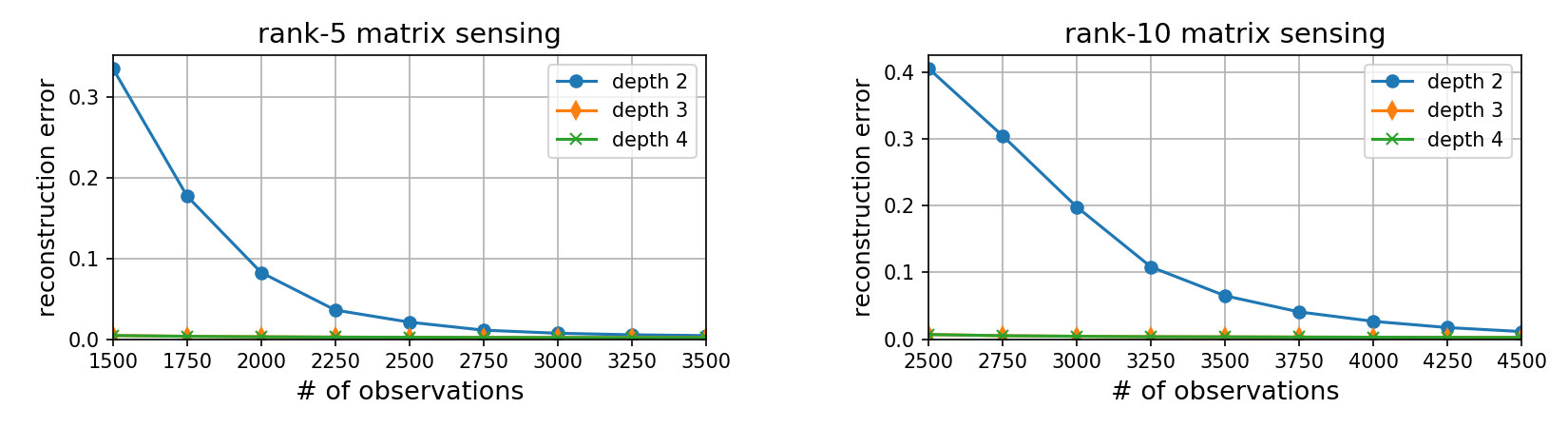}
\end{center}
\caption{
Matrix sensing via gradient descent over deep matrix factorizations.
This figure is identical to Figure~\ref{fig:exper_intro}, except that reconstruction of a ground truth matrix is based not on a randomly chosen subset of entries, but on a set of random projections (\ie~on $\{ \inprod{A_i}{W^*} \}_{i = 1}^m$ where $W^*$ is the ground truth and $A_1, \ldots, A_m$ are measurement matrices drawn independently from a Gaussian distribution).
For further details on this experiment~see~Appendix~\ref{app:exper:imple}.
}
\label{fig:exper_intro_supp}
\end{figure}

\begin{figure}
\begin{center}
\includegraphics[width=\textwidth]{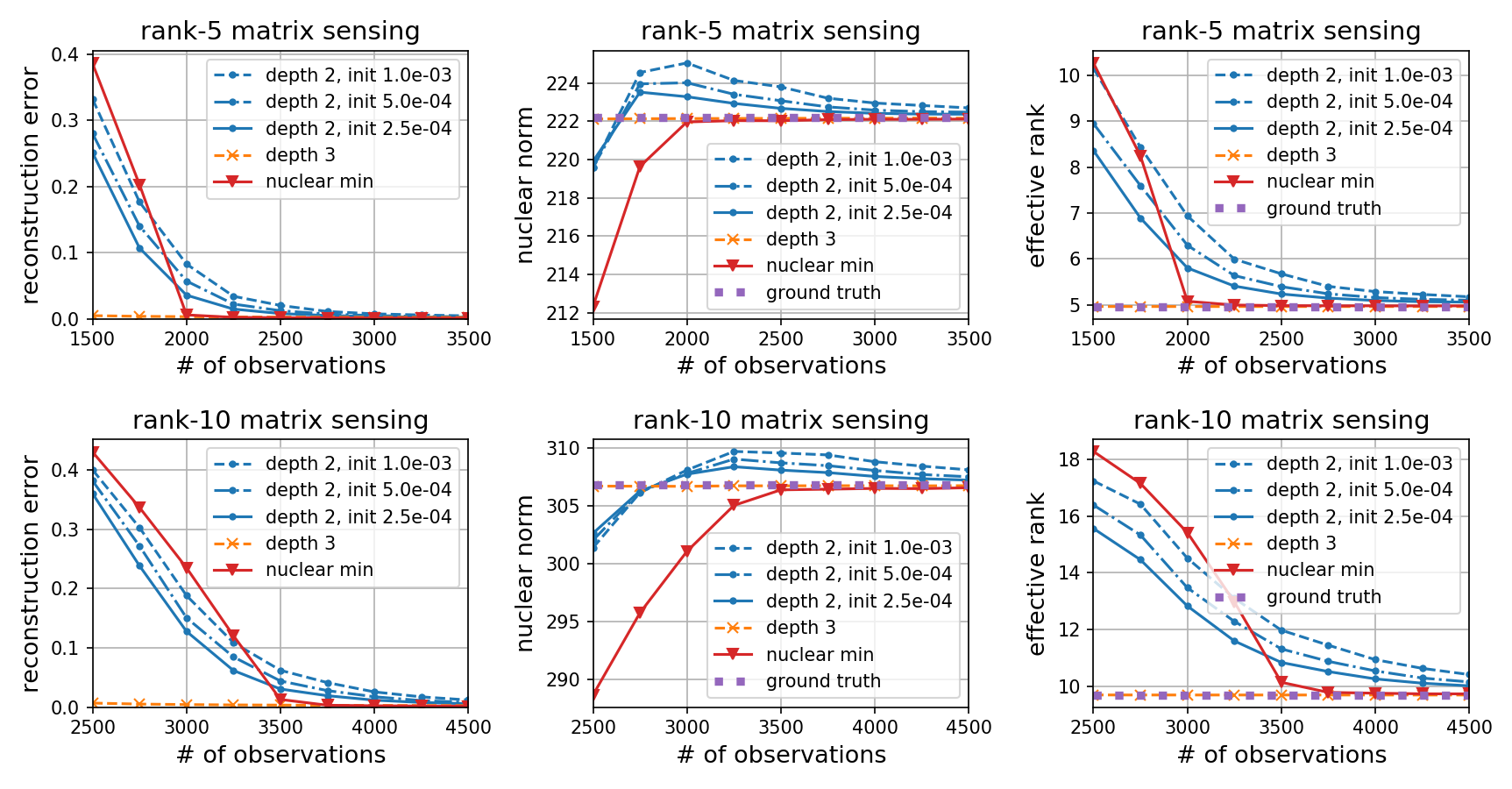}
\end{center}
\caption{
Evaluation of nuclear norm as the implicit regularization in deep matrix factorization on matrix sensing tasks.
This figure is identical to Figure~\ref{fig:exper_norm}, except that reconstruction of a ground truth matrix is based not on a randomly chosen subset of entries, but on a set of random projections (\ie~on $\{ \inprod{A_i}{W^*} \}_{i = 1}^m$ where $W^*$ is the ground truth and $A_1, \ldots, A_m$ are measurement matrices drawn independently from a Gaussian distribution).
For further details on this experiment see Appendix~\ref{app:exper:imple}.
}
\label{fig:exper_norm_supp}
\end{figure}

\begin{figure}
\begin{center}
\includegraphics[width=\textwidth]{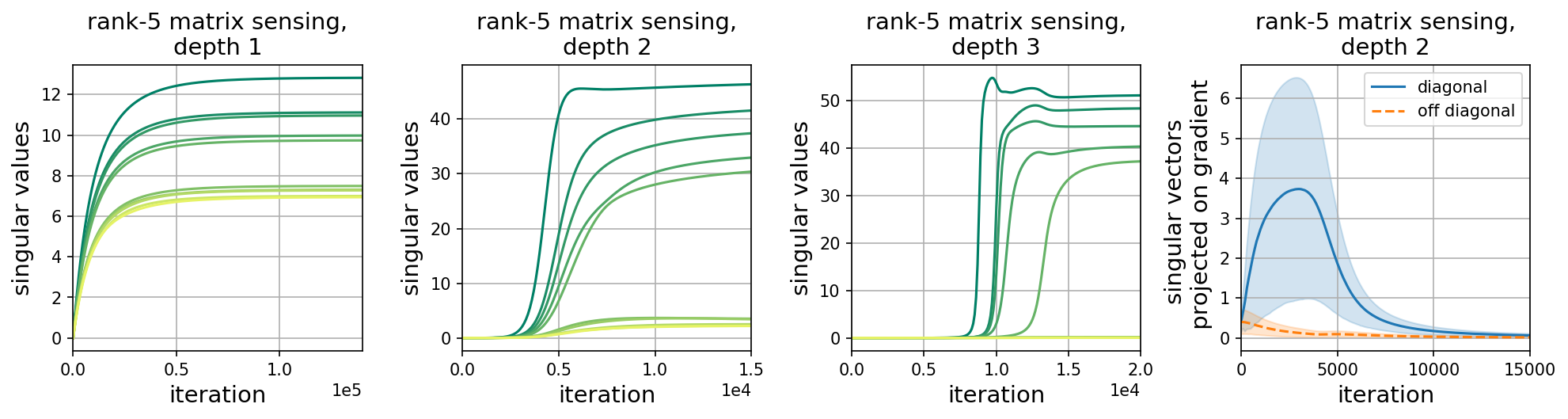}
\end{center}
\caption{
Dynamics of gradient descent over deep matrix factorizations on a matrix sensing task.
This figure is identical to the top row of Figure~\ref{fig:exper_dyn}, except that training is based not on $2000$ randomly chosen entries of the ground truth matrix, but on $2000$ random projections (\ie~on $\{ \inprod{A_i}{W^*} \}_{i = 1}^{2000}$ where $W^*$ is the ground truth and $A_1, \ldots, A_{2000}$ are measurement matrices drawn independently from a Gaussian distribution).
For further details on this experiment see Appendix~\ref{app:exper:imple}.
}
\label{fig:exper_dyn_supp}
\end{figure}

\subsection{Implementation details} \label{app:exper:imple}

In this appendix we provide implementation details omitted from the descriptions of our experiments (Figures~\ref{fig:exper_intro}, \ref{fig:exper_norm}, \ref{fig:exper_dyn}, \ref{fig:exper_intro_supp}, \ref{fig:exper_norm_supp} and~\ref{fig:exper_dyn_supp}).
Our implementation is based on Python, with PyTorch~(\cite{paszke2017automatic}) for realizing deep matrix factorizations and CVXPY~(\cite{diamond2016cvxpy,agrawal2018rewriting}) for finding minimum nuclear norm solutions.
Source code for reproducing our results can be found in \url{https://github.com/roosephu/deep_matrix_factorization}.

When referring to a random rank-$r$ matrix with size $d \times d'$, we mean a product~$U V^\top$, where the entries of $U \in \R^{d, r}$ and~$V \in \R^{d', r}$ are drawn independently from the standard normal distribution.
Randomly chosen observed entries in synthetic matrix completion tasks (Figures~\ref{fig:exper_intro}, \ref{fig:exper_norm} and top row of Figure~\ref{fig:exper_dyn}) were selected uniformly (without repetition).
In synthetic matrix sensing tasks (Figures~\ref{fig:exper_intro_supp}, \ref{fig:exper_norm_supp} and~\ref{fig:exper_dyn_supp}), entries of all measurement (projection) matrices were drawn independently from the standard normal distribution.
When varying the number of observations in synthetic matrix completion and sensing (Figures~\ref{fig:exper_intro}, \ref{fig:exper_norm}, \ref{fig:exper_intro_supp} and~\ref{fig:exper_norm_supp}), we evaluated increments of~$250$.
Training on MovieLens~100K dataset (bottom row of Figure~\ref{fig:exper_dyn}) comprised fitting $10000$ randomly (uniformly) chosen samples from the $100000$ entries given in the $943 \times 1682$ user-movie~rating~matrix~(see~\cite{harper2016movielens}).

In all experiments, deep matrix factorizations were trained by (full batch) gradient descent applied to $\ell_2$~loss over the observed entries (in matrix completion tasks) or given projections (in matrix sensing tasks), with no explicit regularization.
Gradient descent was initialized by independently sampling all weights from a Gaussian distribution with zero mean and configurable standard deviation.
Learning rates were fixed throughout optimization, and the stopping criterion was training loss reaching value lower than $10^{-6}$ (or $10^6$~iterations elapsing).
In the nuclear norm evaluation experiments (Figures~\ref{fig:exper_norm} and~\ref{fig:exper_norm_supp}), learning rate and standard deviation of initialization for gradient descent were assigned values from the set $\{10^{-3}, 5 \cdot 10^{-4}, 2.5 \cdot 10^{-4}\}$.
In the dynamics illustration experiments (Figures~\ref{fig:exper_dyn} and~\ref{fig:exper_dyn_supp}), displayed results correspond to both learning rate and standard deviation for initialization being~$10^{-3}$.

In figures~\ref{fig:exper_intro}, \ref{fig:exper_norm}, \ref{fig:exper_intro_supp} and~\ref{fig:exper_norm_supp}, each error bar marks standard deviation of the respective result over three trials differing in random seed for initialization of gradient descent.
Reconstruction error with respect to a ground truth matrix~$W^*$ is based on normalized Frobenius distance, \ie~for a solution~$W$ it is $\norm{W - W^*}_F / \norm{W^*}_F$.
In experiments with matrix completion and sensing under varying number of observations (Figures~\ref{fig:exper_intro}, \ref{fig:exper_norm}, \ref{fig:exper_intro_supp} and~\ref{fig:exper_norm_supp}), plots begin at the smallest number for which stable results were obtained, and end when all evaluated methods are close to zero reconstruction error.
For the dynamics illustration experiments (Figures~\ref{fig:exper_dyn} and~\ref{fig:exper_dyn_supp}), plots showing singular values hold $10$ curves corresponding to the largest ones.

\end{document}